\newcommand{\calV}{\mathcal{V}}
\newcommand{\calE}{\mathcal{E}}
\newcommand{\calG}{\mathcal{G}}
\newcommand{\calT}{\mathcal{T}}
\newcommand{\calK}{\mathcal{K}}
\newcommand{\bA}{\textbf{A}}
\newcommand{\bC}{\textbf{C}}
\newcommand{\bW}{\textbf{W}}
\newcommand{\ba}{\textbf{a}}
\newcommand{\bJ}{\textbf{J}}
\newcommand{\bx}{\textbf{x}}
\newcommand{\bX}{\textbf{X}}
\newcommand{\bI}{\textbf{I}}
\newcommand{\be}{\textbf{e}}
\newcommand{\bE}{\textbf{E}}
\newcommand{\bvarepsilon}{\boldsymbol{\varepsilon}}
\newcommand{\balpha}{\boldsymbol{\alpha}}
\newcommand{\bbeta}{\boldsymbol{\beta}}
\newtheorem{theorem}{Theorem}
\newtheorem{lemma}{Lemma}
\newtheorem{definition}{Definition}
  \providecommand\BibTeX{{%
    \normalfont B\kern-0.5em{\scshape i\kern-0.25em b}\kern-0.8em\TeX}}}
\begin{document}
\begin{sloppypar}
\title{MM-DAG: Multi-task DAG Learning for Multi-modal Data - with Application for Traffic Congestion Analysis}

\author{Tian Lan}
\authornote{The work is done during the author's internship in SenseTime.}
\orcid{0009-0005-8331-1190}
\email{t-lan19@mails.tsinghua.edu.cn}
\affiliation{
  \institution{Tsinghua University}
  \city{Beijing}
  \country{China}
}

\author{Ziyue Li}
\orcid{0000-0003-4983-9352}
\email{zlibn@wiso.uni-koeln.de}
\affiliation{
  \institution{University of Cologne}
  \city{Cologne}
  \country{Germany}
}

\author{Zhishuai Li}
\orcid{0000-0003-3408-6300}
\email{lizhishuai@sensetime.com}
\affiliation{
  \institution{SenseTime Research}
  \city{Shanghai}
  \country{China}
}

\author{Lei Bai}
\orcid{0000-0003-3378-7201}
\email{baisanshi@gmail.com }
\affiliation{
  \institution{Shanghai AI Laboratory}
  \city{Shanghai}
  \country{China}
}

\author{Man Li}
\orcid{0000-0003-3701-7722}
\email{mlicn@connect.ust.hk}
\affiliation{
  \institution{Hong Kong University of Science and Technology}
  \country{Hong Kong}
}

\author{Fugee Tsung}
\orcid{0000-0002-0575-8254}
\email{season@ust.hk}
\affiliation{
  \institution{Hong Kong University of Science and Technology}
  \country{Hong Kong}
}

\author{Wolfgang Ketter}
\orcid{0000-0001-9008-142X}
\email{ketter@wiso.uni-koeln.de}
\affiliation{
  \institution{University of Cologne}
  \city{Cologne}
  \country{Germany}
}

\author{Rui Zhao}
\orcid{0000-0001-5874-131X}
\email{zhaorui@sensetime.com}
\affiliation{
  \institution{SenseTime Research}
  \country{China}
}

\author{Chen Zhang}
\orcid{0000-0002-4767-9597}
\authornote{Corresponding author.}
\email{zhangchen01@tsinghua.edu.cn}
\affiliation{
  \institution{Tsinghua University}
  \city{Beijing}
  \country{China}
}

\renewcommand{\shortauthors}{Lan, et al.}

\begin{abstract}
  This paper proposes to learn Multi-task, Multi-modal Direct Acyclic Graphs (MM-DAGs), which are commonly observed in complex systems, e.g., traffic, manufacturing, and weather systems, whose variables are multi-modal with scalars, vectors, and functions. This paper takes the traffic congestion analysis as a concrete case, where a traffic intersection is usually regarded as a DAG. In a road network of multiple intersections, different intersections can only have some \textit{overlapping and distinct} variables observed. For example, a signalized intersection has traffic light-related variables, whereas unsignalized ones do not. This encourages the multi-task design: with each DAG as a task, the MM-DAG tries to learn the multiple DAGs jointly so that their consensus and consistency are maximized. To this end, we innovatively propose a multi-modal regression for linear causal relationship description of different variables. Then we develop a novel Causality Difference ($CD$) measure and its differentiable approximator. Compared with existing SOTA measures, $CD$ can penalize the causal structural difference among DAGs with distinct nodes and can better consider the uncertainty of causal orders. We rigidly prove our design's topological interpretation and consistency properties. We conduct thorough simulations and one case study to show the effectiveness of our MM-DAG. The code is available under \url{https://github.com/Lantian72/MM-DAG}. 
\end{abstract}

\begin{CCSXML}
<ccs2012>
   <concept>
       <concept_id>10002950.10003648.10003649.10003655</concept_id>
       <concept_desc>Mathematics of computing~Causal networks</concept_desc>
       <concept_significance>500</concept_significance>
       </concept>
   <concept>
       <concept_id>10010147.10010178.10010187.10010192</concept_id>
       <concept_desc>Computing methodologies~Causal reasoning and diagnostics</concept_desc>
       <concept_significance>500</concept_significance>
       </concept>
   <concept>
       <concept_id>10010147.10010257.10010258.10010262</concept_id>
       <concept_desc>Computing methodologies~Multi-task learning</concept_desc>
       <concept_significance>500</concept_significance>
       </concept>
 </ccs2012>
\end{CCSXML}

\ccsdesc[500]{Mathematics of computing~Causal networks}
\ccsdesc[500]{Computing methodologies~Multi-task learning}



\keywords{causal structure learning, multi-task learning, multi-modal data, directed acyclic graph}

\maketitle
\section{Introduction}

 Directed Acyclic Graph (DAG) is a powerful tool for describing the underlying causal relationships in a system. One of the most popular DAG formualtions is Bayesian Network (BN)\cite{zheng2018dags}.  It has been widely applied to the biological, physical, and social systems \cite{long2011urban, velikova2014exploiting, ruz2020sentiment}. In a DAG, 
nodes represent variables, and directed edges represent causal dependencies between nodes. By learning the edges and parameters of the DAG, the joint distribution of all the variables can be analyzed.

\begin{figure}[t]
    \centering
    \includegraphics[width=\columnwidth]{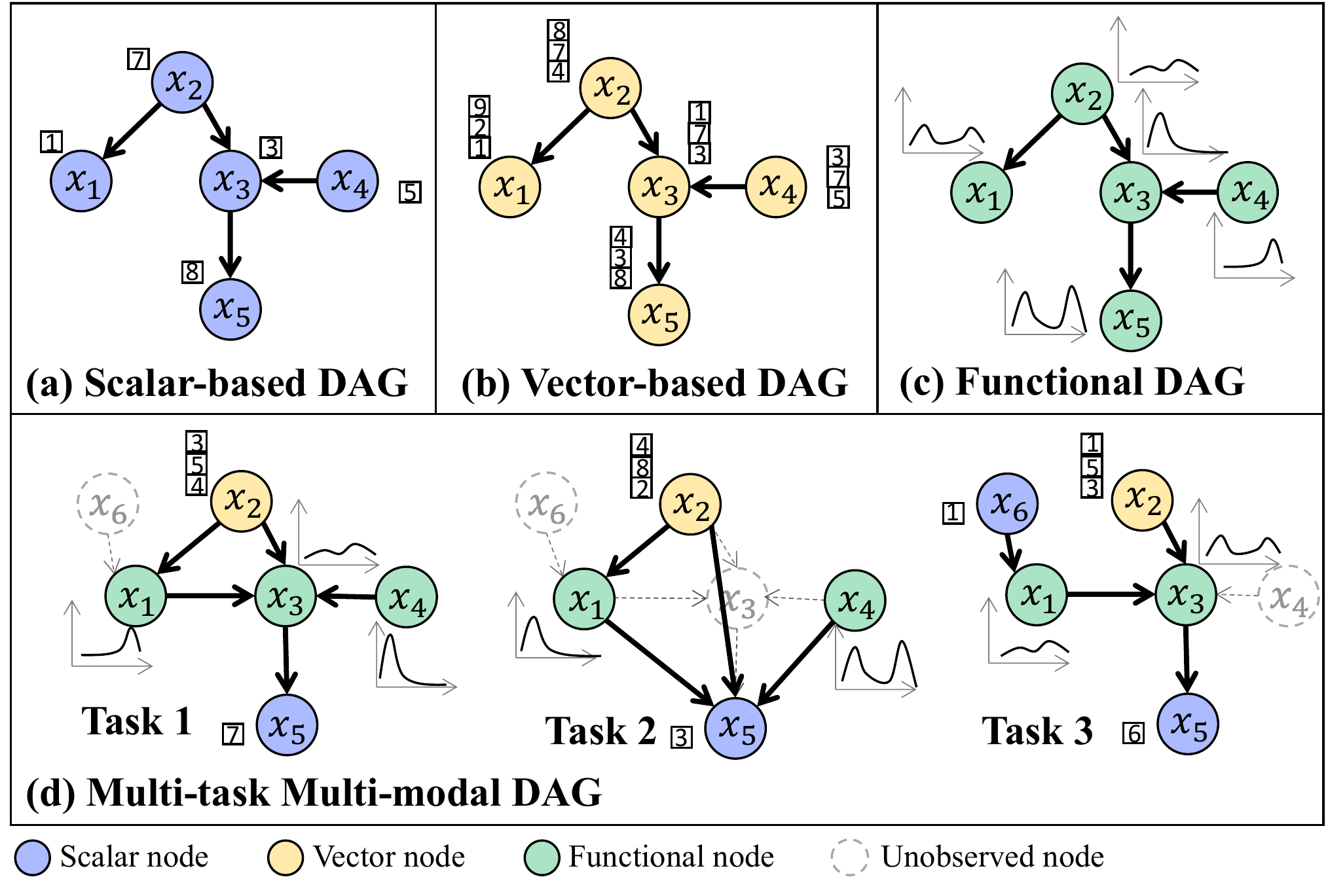}
    \caption{DAGs of (a) scalar, (b) vector, (c) functions v.s. (d) MM-DAG. Each node denotes a variable, and the directed edge means the causal dependence. The classical DAG assumes homogeneous (uni-modal) node variables, especially as scalars. MM-DAG conforms to reality better where node variables are versatile (multi-modal), and each task has overlapping and distinct nodes.}
    \label{fig:MM-DAG1}
    \vspace{-10pt}
\end{figure}

Urban traffic congestion becomes a common problem in metropolises, as urban road network becomes complicated and vehicles increase rapidly. Many factors will cause traffic congestion, such as Origin-Destination (OD) demand, the cycle time of traffic lights, weather conditions, or a road accident. Causal analysis of congestion has been highly demanded in applications of intelligent transportation systems. There is emerging research applying classical DAGs for modeling the probabilistic dependency structure of congestion causes and analyzing the probability of traffic congestion given various traffic condition scenarios \cite{kim2016diagnosis,afrin2021probabilistic, luan2022traffic}. 
When mining the causality for traffic congestion, as the classical DAG-based solution, a traffic intersection is usually regarded as a DAG, whereas different congestion-related traffic variables (e.g., lane speed and signal cycle length) are treated as nodes. However, there are still some challenges to be solved. 

(1) \textbf{Multi-mode}: First, so far, to our best knowledge, all the current DAGs consider each node as a scalar variable, which may deviate from reality: 
In complex systems such as transportation, variables are common in different modes, i.e., \textit{scalar}, \textit{vector}, and \textit{function}, due to the variables' innate nature and/or being collected from different kinds of sensors, as shown in Fig. \ref{fig:MM-DAG1}.(a)-(c). 
A \textit{scalar} node is defined as a node that only has a one-dimensional value for each sample, e.g., the cycle time of traffic lights is usually fixed and scarcely tuned. So its signals are sampled at low frequency, and only one data point is fed back in one day. A \textit{vector} node instead records a vector with higher but \textbf{finite} dimensions, e.g., the congestion indicator variable is calculated per hour, thus a fixed dimension of 24 per day. A \textit{functional} node records a random function for each sample, with the function being high dimensional and also \textbf{infinite}, e.g., the real-time mean speed of lanes can be recorded every second and its dimension goes to infinity for one day. 
\textit{So far, there is no DAG modeling able to deal with multi-modal data. } 

(2) \textbf{Multi-task with Overlapping and Distinct Variables}: 
We define a \textit{task} as a DAG learning, e.g., for each intersection in the traffic case. In complex systems, different tasks can only have some overlapping variables, with some particular variables only distinctly occurring in some specific tasks. We define this officially as \textit{overlapping and distinct} observations (variables). As such, each task can be regarded as only observing a unique subset of all possible variables. This may be due to their different experiences and hardware availability. 
For example: 1) Distinct: a signalized intersection (e.g., Task 3) has the node variable related to traffic light parameter (e.g., $x_6$), such as phase length, whereas a road segment (e.g., Task 1) and an unsignalized intersection (e.g., Task 2) do not have $x_6$; 2) Overlapping: Task 1-3 all have $x_1, x_2, x_5$ in Fig. \ref{fig:MM-DAG1}.(d). 
The different availability of nodes in each task is the \textbf{dissimilarity} of our multi-task setting. In multi-task learning \cite{zhang2021survey}, two important concepts are exactly the dissimilarity and similarity of tasks. 

\textbf{(3) Consistent Causal Relations:} Despite the different nodes on each task, we assume the causal relations of each DAG should be almost \textit{consistent} and \textit{non-contradictory}. For instance, $x_1$ is the cause of $x_3$ in Task 1, and this causal relation is not likely to be reversed in another task. This is because although with different subsets of the nodes, the DAGs are sharing and reflecting the similar fundamental and global causal reasoning of the system. 
This fundamental causal reasoning is usually consistent, usually due to the inherent physical, topological, biochemical properties, and so on. The consistent causal reasoning commonly shared by all the tasks is the \textbf{similarity} of our multi-task setting. However, it is worth mentioning that because nodes vary in each task, the corresponding causal relation structure will undoubtedly adapt, even with significant differences sometimes. For example, as illustrated in Task 1 and 2 in Fig \ref{fig:MM-DAG1}.(d), because node $x_3$ is uninvolved in Task 2, all the edges from its predecessors $\{x_1, x_2, x_4 \}$ will be transited to its successors ($x_5$) directly, rendering a big difference of edges (yet still consistent).

The core challenge is thus to define structure differences between DAGs with different but overlapping sets of nodes, yet still learn the causal reasoning consistently. To this end, it is essential to learn these tasks jointly so that each DAG provides complementary information mutually and learns toward globally consistent causal relations. On the contrary, if separately learned, the causal structure of each task could be partial, noisy, and even contradicting.

Motivated by the three challenges, this paper aims at constructing DAG for multi-modal data and developing a structure inference algorithm in a multi-task learning manner, where the node sets of different tasks are overlapping and distinct. To achieve it, three concrete questions need to be answered: (1) how to extract information for nodes with different dimensions and model their causal dependence? (2) how to measure the differences in causal structures of DAGs across tasks? (3) how to design a structural learning algorithm for DAGs of different tasks? 

Unfolded by solving the above questions, we are the first to construct multi-task learning for multi-modal DAG, named MM-DAG. First, we construct a linear multimode-to-multimode regression for causal dependence modeling of multi-modal nodes. Then we develop a novel measure to evaluate the causal structure difference of different DAGs. Finally, a score-based method is constructed to learn the DAGs across tasks with overlapping and distinct nodes such that they have similar structures. Our contributions are:


\begin{itemize}
    \item We propose a multimode-to-multimode regression 
    to represent the linear causal relationship between variables. It can deal with nodes of scalar, vector, and functional data.
    \item We develop a novel measure, i.e., \textbf{C}ausality \textbf{D}ifference (\textit{CD}), to evaluate the structure difference between pairwise DAGs with overlapping and distinct nodes. It can better handle graphs with distinct nodes and consider the uncertainty of causal order. A differentiable approximator is also proposed for its better compatiblity with our learning framework.  
    \item 
    We conduct a score-based structure learning framework for MM-DAGs, with our novelly-designed differentiable $CD$ function to penalize DAGs' structure difference. Most importantly, we also prove theoretically the topological interpretation and the consistency of our design.
    \item We apply MM-DAG in traffic condition data of different contexts to infer traffic congestion causes. The results provide valuable insights into traffic decision-making. 
\end{itemize}

It is to be noted that considering even for the most commonly used causal structural equation models (SEM), there is no work of multi-task learning for DAG with multimode data. Hence we focus on linear multimode-to-multimode regression, as the first extension of SEM to multi-modal data. We hope to shed light upon this research field since the linear assumption is easy to comprehend. However, our proposed CD measure and multi-task framework can be easily extended to more general causal models, including some nonlinear or deep learning models, with details in Sec. \ref{sec: non-linear}.

The remainder of the paper is organized as follows. Section \ref{sec:related-work} reviews the current work about DAG, multi-task learning, and traffic congestion cause analysis. Section \ref{sec:methodology} introduces the model construction of MM-DAG in detail and discusses how to extend our model to nonlinear cases. Section \ref{sec:experiment} shows the experimental results, including the synthetic data and traffic data by SUMO simulation. Conclusions and future work are drawn in Section \ref{sec:conclusion}. 

\section{Related Work} \label{sec:related-work}


\subsection{DAG Structure Learning Algorithm}
Structure learning for DAG, i.e., estimating its edge sets and adjacency matrix, is an important and well-studied research topic. The current methods can be categorized into constraint-based algorithms and score-based algorithms. (1) Constraint-based algorithms employ statistical hypothesis tests to identify directed conditional independence relationships from the data and construct a BN structure that best fits those directed conditional independence
relationships, including PC \citep{spirtes2000causation}, rankPC \citep{harris2013pc}, and fast causal inference \citep{spirtes2000causation}. However, constraint-based algorithms are built upon the assumption that independence tests should accurately reflect the (in)dependence modeling mechanism, which is generally difficult to be satisfied in reality. As a result, these methods suffer from error propagation, where a minor error in the early phase can result in a very different DAG. 
(2) For score-based methods, a scoring function, such as fitting mean square error or likelihood function, is constructed for
evaluating the goodness of a network structure. Then a search procedure for the highest-scored structure, such as stochastic local research \citep{chickering2002optimal,nandy2018high} or dynamic programming \citep{koivisto2004exact}, is formulated as a combinatorial optimization problem. 
However, these methods are still very unpractical and restricted for large-scale problems.

Some other algorithms for structure learning have been developed to reduce computation costs recently. The most popular one is NoTears \citep{zheng2018dags}. It represents acyclic constraints by an algebraic characterization, which is differentiable and can be added to the score function. The gradient-based optimization algorithm can be used for structure learning. Most recent DAG structural learning studies follow the insights of NoTears \cite{bhattacharya2021differentiable, ng2020role}.  Along this direction, there are also emerging works applying the Notears constraint into nonlinear models for nonlinear causality modeling. The core is to add the Notears constraint into original nonlinear model's loss function to guarantee the graph's acyclic property. For example, \citet{zheng2020learning} proposes a general nonparametric modeling framework to represent nonlinear causal structural equation model (SEM). \citet{yu2019dag} proposes a deep graph convolution model where the graph represents the causal structure.
\subsection{Multi-task Learning Algorithm for DAG}
Multi-task learning is common in complex systems such as manufacturing and transport \cite{zhang2021survey, li2022profile, shen2023multi}. In DAG, multiple-task modeling is first proposed for tasks with the same node variables and similar causal relationships \citep{niculescu2007inductive}. To learn different tasks jointly, it penalizes the number of different edges among tasks and uses a heuristic search to find the best structure. \citet{oyen2012leveraging} further introduces a task-relatedness metric, allowing explicit control of information sharing between tasks into the learning objective. 
\citet{oyen2013bayesian} proposes to penalize the number of edge additions which breaks down into local calculations, i.e., the number of differences of parent nodes for different tasks, to explore shared and unique structural features among tasks in a more robust way. \citet{oates2016exact} proposes to model multiple DAGs by encoding the relationships between different DAGs into an undirected network.

As an alternative solution for multi-task graph learning, hidden structures are exploited to fuse the information across tasks. The idea is first to find shared hidden structures among related tasks and then treat them as the structure penalties in the learning step\citep{oates2016exact}. Later, to better address the situation that the shared hidden structure comes from different parts of different DAGs, \citet{zhou2017multiple} proposes to use a non-negative matrix factorization method to decompose the multiple DAGs into different parts and use the corresponding part of the shared hidden structure as a penalty in different learning tasks. However, these methods penalize graph differences based on their general topology structure, which yet does not represent causal structure. To better add a penalty from a causality perspective, \citet{chen2021multi} proposes to regularize causal orders of different tasks to be the same. However, all the above methods should assume different tasks share the same node set and cannot be applied for tasks with both shared and specific nodes.



\subsection{Congestion Causes Analysis}
Smart transport has been an essential chapter, yet with many works focusing on demand prediction \cite{li2020tensor,li2020long}, trajectory \cite{li2022individualized, ziyue2021tensor, mao2022jointly}, or etc. Congestion root analysis instead should gain more attention since it is safety-related. It uses traffic variables to classify congestion into several causes. \citet{chow2014empirical} uses linear regression to diagnose and assign observed congestion to various causes. \citet{al2017distributed,al2019cooperative} propose a real-time classification framework for congestion by vehicular ad-hoc networks. 
\citet{afrin2021probabilistic} uses BN to estimate the conditional probability between variables. \citet{kim2016diagnosis} divides the nodes in BN into three groups, representing the environment, external events, and traffic conditions, and uses the discrete BN to estimate the causal relationships between nodes. 
However, the studies above did not involve the correlations between different congestion causes and just classified the congestion into several simple categories. Besides, BN \cite{xing2015discovering, fan2019prediction} has also been applied to congestion propagation \cite{daganzo1994cell, long2011urban, zhang2012ctm}. Other propagation models include the Gaussian mixture model \citep{sun2006bayesian}, congestion tree structure \citep{zhou2017multiple}, and Bayesian GCN \citep{luan2022traffic}. Yet we focus on the root causes analysis instead of congestion propagation.


\section{Methodology}
\label{sec:methodology}
We assume there are in total $L$ tasks. For each task $l=1,\ldots, L$, we have $P_{l}$ nodes, with the node set $\calV_{l}=\{1,\ldots,P_{l}\}$. The node $j$ in task $l$ is denoted as $x_{j (l)} \in \mathbb{R}^{T_{j(l)}}$ representing a variable with dimension $T_{j(l)}$. Depending on $T_{j(l)}$, a node $x_{j(l)}$ can represent multi-modal data: $x_{j(l)}$ is a scalar when $T_{j(l)}=1$, a vector when $T_{j(l)} \in [2,\infty)$, and a function if $T_{j(l)} = \infty$. We aim to construct a DAG for task $l$, i.e., $\calG_{l}=(\calV_{l},\calE_{l})$, where the edge set $\calE_{l}\in \mathbb{R}^{P_{l} \times P_{l}}$ and an edge $(j, k)$ represents a causal dependence $x_{j(l)} \to x_{k(l)}$. 

In Section \ref{sec:Multimode-dag}, we temporarily focus on a single task and assume that the causal structure is known. We construct a probabilistic representation of multi-mode DAG by \textbf{mul}ti\textbf{mo}de to \textbf{mul}ti\textbf{mo}de regression, called \textit{mulmo2} for short. Then in Section \ref{sec:structural-learning}, we consider all the $L$ tasks and propose a score-based objective function for structural learning. Its core is how to measure and penalize the causal structure difference of different tasks. Here we provide a novel measure, \textit{CD}, together with its differentiable variant \textit{DCD}, which tries to keep the transitive causalities among overlapping nodes of different tasks to be consistent, as elaborated in Section \ref{sec:design-penalty}. Finally, in Section \ref{sec:algorithm}, we give the optimization algorithm for solving the score-based multi-task learning. 

\subsection{Multi-mode DAG with Known Structure} 
\label{sec:Multimode-dag}
We temporarily focus on single-task learning. For notation convenience, we remove the subscript $l$ of $\calG_{l}$ in Section \ref{sec:Multimode-dag}. Besides, we temporarily assume that causal structure $\calE$, i.e., the parents of each node, are known. We denote the parents of the node $j$ as $pa_j=\{j'| (j',j)\in \calE)\}$. Thus, the joint distribution for sample $n$ is the production of the conditional distribution of each node.
\begin{equation}
\begin{split}
    &p(x^{(n)}_1,\ldots,x^{(n)}_P) = \prod_{j=1}^P p(x^{(n)}_j|pa_{j})
\end{split}
\end{equation}

When the multi-mode nodes have finite dimensions, the relationships among a multi-mode node $j$ and its parent $j'\in pa_j$ can be represented by the following \textit{mulmo2} regression model:
\begin{equation} \label{equ:multimodel-regression}
\begin{split}
    x^{(n)}_{j} &= \sum_{j' \in pa_j} \ell_{j'j}(x^{(n)}_{j'}) + e_{j}^{(n)}
\end{split}
\end{equation}
where $\ell_{j'j}$ is the linear transform of $x_{j'}$ for $(j',j) \in \calE$, $e_j^{(n)}$ is the noise of $x_j^{(n)}$ with the expectation $\mathbb{E}[e_j^{(n)}]=0$. 

We consider $\ell_{j'j}$ for four cases by whether $T_j$ or $T_{j'}$ is infinite, as shown in Fig. \ref{fig:MM-DAG}. If $T_j$ or $T_{j'}$ is infinite, we consider $x_j$ or $x_{j'}$ as a functional variable. From the following, by abuse of notation, we define a vector node as $\bx_{j}$, a function node as $x_{j}(t), t \in \Gamma$. Without loss of generality, we assume the $\Gamma=[0,1]$ is a compact time interval for all the function nodes.

\begin{figure}[t]
    \centering
    \includegraphics[width=0.52\columnwidth]{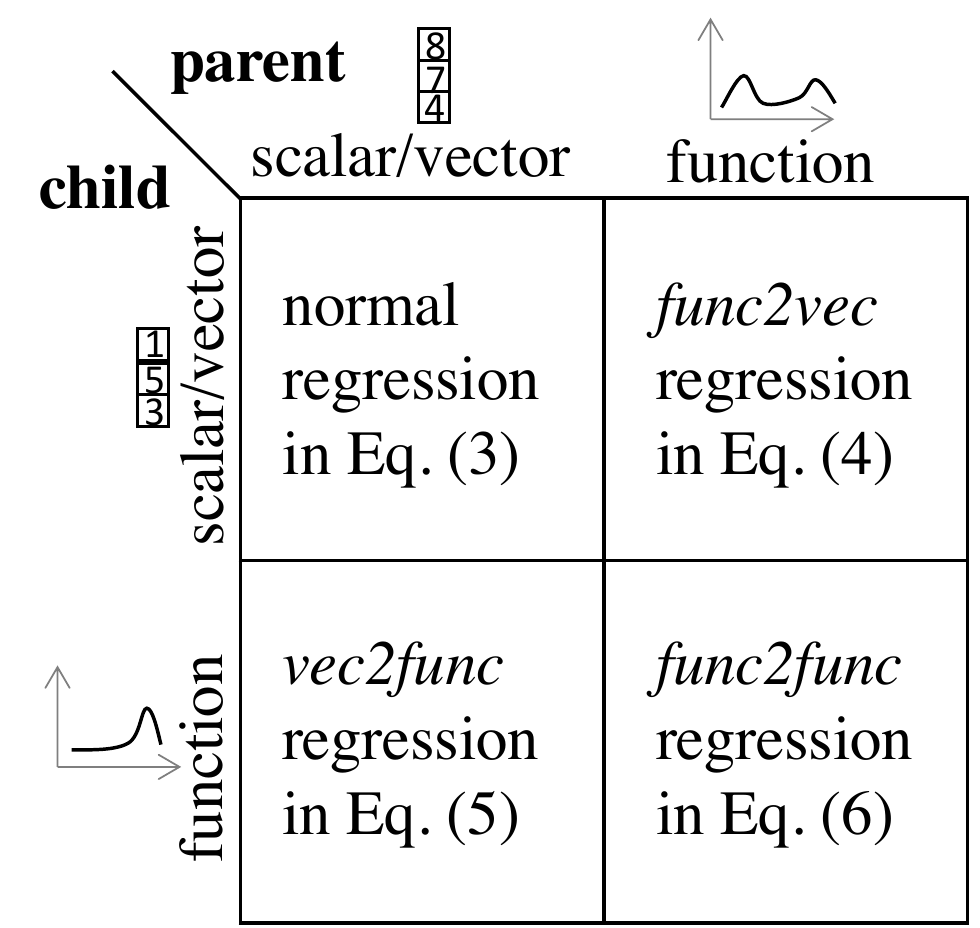}
    \caption{Four components of \textit{mulmo2} regression.}
    \label{fig:MM-DAG}
    \vspace{-10pt}
\end{figure}

\textbf{Case 1:} Both of two nodes have finite dimensions, i.e., $T_j, T_{j'} < \infty$. Then the transition equation is a normal regression:
\begin{equation} \label{equ:finite-finite}
    \left( \ell_{j'j}(\bx_{j'}^{(n)})\right)_t = \sum_{s=1}^{T_{j'}} c_{j'jst} \bx_{j's}^{(n)}, t= 1,\ldots, T_j.
\end{equation}
Here $c_{j'jst}$ is the coefficient of component $s$ of the vector $\bx_{j'}$ to component $t$ of the vector $\bx_j$ and $(j',j) \in \calE$.

\textbf{Case 2:} $\bx_j$ has finite dimensions (vector) and $x_{j'}(t)$ has infinite dimensions (function), i.e., $T_j < \infty, T_{j'} = \infty$. Then $\ell_{j'j}$ is:
\begin{equation} \label{equ:finite-infinite}
\begin{split}
    \left(\ell_{j'j}(x_{j'}^{(n)}(s))\right)_t = \int_0^1 \gamma_{j'jt}(s) x^{(n)}_{j'}(s) {\rm d}s,t= 1,\ldots, T_j.
\end{split}
\end{equation}
Here $\gamma_{j'jt}(s)$ is the coefficient function for component $t$ in vector $
\bx_{j}$ and $(j',j) \in \calE$.

\textbf{Case 3:} $x_j(t)$ has infinite dimensions (function) and $\bx_{j'}$ has finite dimensions (vector), i.e., $T_j = \infty, T_{j'} < \infty$. In this case, the linear regression between vector-to-function regression is:
\begin{equation} \label{equ:infinite-finite}
\begin{split}
    \ell_{j'j}(\bx_{j'}^{(n)})(t) &= \sum_{s=1}^{T_{j'}} \gamma_{j'js}(t) \bx^{(n)}_{j's},
\end{split}
\end{equation}
where $\gamma_{j'js}(t)$ is the coefficient function for $s$-th component in vector $\bx_{j'}$ and $(j',j) \in \calE$.

\textbf{Case 4:} Both of two nodes have infinite dimensions, i.e., $T_j, T_{j'} = \infty$, Then, the linear function-to-function (\textit{func2func}) regression is:

\begin{equation} \label{equ:infinite-infinite}
\begin{split}
    \ell_{j'j}(x_{j'}^{(n)})(t)&= \int_0^1 \gamma_{j'j}(t,s) x^{(n)}_{j'}(s) {\rm d}s,
\end{split}
\end{equation}
where $\gamma_{j'j}(t,s)$ is the coefficient function for $(j',j) \in \calE$.

For any node $j\in\{j\in \calV|T_j=\infty\}$, $x_j(t)$ is in infinite dimensions and hard to be estimated directly. It is common to decompose them into a well-defined continuous space for feature extraction:
\begin{equation} \label{equ:FPCA-x}
    x_j^{(n)}(t) = \sum_{k=1}^{K_j} \alpha_{jk}^{(n)} \beta_{jk}(t) + \varepsilon_j^{(n)}(t),
\end{equation}
where ${\beta_{jk}(t)}$ is the orthonormal functional basis, with $\int_0^1 \beta_{jk}(t)^2{\rm d}t = 1$ and $\int_0^1\beta_{jk}(t)\beta_{jk'}(t){\rm d}t = 0$ for $k,k'=1,\ldots K_j$ and $k\ne k'$, $\alpha_{jk}^{(n)}$ is the corresponding coefficient. $\alpha_{jk}^{(n)}$ and $\beta_{jk}(t)$ can be obtained by Functional Principal Component Analysis (FPCA) \cite{yao2005functional}, and $\varepsilon_j^{(n)}(t)$ is the residual of FPCA.

After decomposing the functional variables $x_j(t)$, we describe transition $\gamma$ in Cases 2, 3, and 4 using the corresponding basis set:
\begin{equation} \label{equ:FPCA-gamma}
\small
\begin{split}
    \gamma_{j'jt}(s)&=\sum_{k'=1}^{K_{j'}} c_{j'jtk'}\beta_{j'k'}(s) \\
    \gamma_{j'js}(t)&=\sum_{k=1}^{K_j} c_{j'jsk}\beta_{jk}(t) \\
    \gamma_{j'j}(t,s)&=\sum_{k=1}^{K_j} \sum_{k'=1}^{K_{j'}} c_{j'jk'k}\beta_{jk}(t)\beta_{j'k'}(s). \\
\end{split}    
\end{equation}

Plugging Eqs. (\ref{equ:FPCA-x}) and (\ref{equ:FPCA-gamma}) into Eqs. (\ref{equ:multimodel-regression}), (\ref{equ:finite-finite}), (\ref{equ:finite-infinite}), (\ref{equ:infinite-finite}) and (\ref{equ:infinite-infinite}), we have the general expression of our \textit{mulmo2} regression:
\begin{equation}\label{equ:transition}
    \ba^{(n)}_j = \sum_{j' \in pa_j} \bC_{j'j}^T \ba^{(n)}_{j'} + \tilde{\be}^{(n)}_j,
\end{equation}
where 
\begin{equation} \label{equ:A}
    \textbf{a}^{(n)}_j = 
    \begin{cases}
       \bx^{(n)}_j \in \mathbb{R}^{T_j}  & T_j < \infty \\
       \balpha^{(n)}_j \in \mathbb{R}^{K_j} & T_j = \infty 
    \end{cases} ,
\end{equation}
$\balpha_{j}^{(n)}=[\alpha_{j1}^{(n)},\ldots,\alpha_{jK_j}^{(n)}]$ is the PC score of node $j$ in sample $n$, and $\bC_{j'j}\in \mathbb{R}^{d(\ba_{j'}) \times d(\ba_j)}$ represents the transition matrix from node $j'$ to node $j$, with $(\bC_{j'j})_{uv} = c_{j'juv}$ and $d(\ba_j)$ as the dimensions of $\ba_j$. $\tilde{\be}_j^{(n)} \in \mathbb{R}^{d(\ba_j)}$ is the noise, with 1) $\tilde{\be}_j^{(n)} = \be_j^{(n)}$ if $T_j<\infty$, and 2) $(\tilde{\be}_{j}^{(n)})_{k} = \int_0^1 e_j^{(n)}(t) \beta_{jk}(t) {\rm d}t$, if $T_j=\infty$. We have $\mathbb{E}[\tilde{\be}_j^{(n)}] = 0$ in these two cases.
It is to be noted that we can also conduct PCA to perform dimension reduction for vector variables like $\bx_j^{(n)} = \sum_{k=1}^{K_j} \alpha_{jk}^{(n)} \bbeta_{jk} + \bvarepsilon_j^{(n)}$, and replace the finite cases in Eq. (\ref{equ:A}) by $\ba_j^{(n)}=\balpha_j^{(n)} \in \mathbb{R}^{K_j}$, $K_j \leq T_j$. 

We assume noise $\tilde{\be}_j$ follows Gaussian distribution independently and interpret Eq. (\ref{equ:transition}) as linear Structural Equation Model (SEM):
\begin{equation} \label{equ:SEM}
    \ba^{(n)} = \bC^T \ba^{(n)} + \tilde{\be}^{(n)}.
\end{equation}
Here $\ba^{(n)} = [\ba^{(n)}_1,\ldots,\ba^{(n)}_{P}]\in \mathbb{R}^{M}$, $\tilde{\be}^{(n)}=[\tilde{\be}_{1}^{(n)},\ldots,\tilde{\be}_{j}^{(n)}]\in \mathbb{R}^{M}$ is the noise vector, and
 $\bC = [\bC_{j'j}] \in \mathbb{R}^{M\times M}$ is the combined matrix where $M = \sum_j d(\ba_j)$.

\subsection{Multi-task Learning of Multi-mode DAG}
\label{sec:structural-learning}
Now we discuss how to estimate the DAG structures for all the tasks.
First, we introduce the concept of causal order $\pi(\cdot)$, which informs possible ``parents'' of each node. It can be represented by a permutation over $1, 2, \ldots, P$. If we sort the nodes set by
their causal orders, the sorted sequence satisfies that the left node is a parent or independent of the right node. 
A graph $\calG=(\calV,\calE)$ is consistent with a causal order $\pi$ if and only if:
\begin{equation} 
    (i,j) \in \calE \Rightarrow \pi(i) < \pi(j).
\end{equation}

In SEM of Eq. (\ref{equ:SEM}), we focus on estimating the transition matrix $\bC$ and its causal order $\pi$. The non-zero entries of the matrix $\bC$ denote the edges of the graph $\calG=(\calV,\calE)$ that must consistent with $\pi$, i.e.,  $\|\bC_{ij}\|_F^2>0 \Rightarrow \pi(i) < \pi(j)$. We denote $\bW_{ij} = \|\bC_{ij}\|_F^2$ to represent the weight of edge from node $i$ to node $j$, where $\bW_{ij}>0$ means $(i,j)\in \calE$. 
Based on the acyclic constraint proposed by NoTears \cite{zheng2018dags}, our score-based estimator of single-task is:
\begin{align}
    \hat{\bC} = \underset{\bC}{\arg \min} \frac{1}{2N} \|\bA - \bA\bC\|_F^2 + \lambda \|\bC\|_1 \label{equ:score-based}\\
    {\rm subject\ to\ } h(\bW) = {\rm tr}(e^{\bW}) - P = 0 .\label{equ:acylic}
\end{align}
where $\bA=[\ba^{(1)},\ldots,\ba^{(N)}]^{T}\in \mathbb{R}^{N\times M}$.
For all the tasks $l=1,\ldots,L$, we denote their corresponding SEMs as:
\begin{equation} 
    \bA_{(l)} = \bC_{(l)}^T \bA_{(l)} + \bE_{(l)},
\end{equation}
where $\bA_{(l)} \in \mathbb{R}^{N_l \times M_l}$, $\bC_{(l)} \in \mathbb{R}^{M_l \times M_l}$, and $\bE_{(l)}=[\tilde{\be}_{(l)}^{(1)},\ldots,\tilde{\be}_{(l)}^{(N_l)}]^{T} \in \mathbb{R}^{N_l \times M_l}$ is the noise matrix of $N_l$ samples of task $l$. 

The core of multi-task learning lies in how to achieve information sharing between tasks. To this end, we add the penalty term to penalize the difference between pairwise tasks and derive a score-based function of multi-task learning as follows: 
\begin{equation} \label{equ:multitask-scorebased}
    \begin{split}
            \hat{\bC}_{(1)},...,\hat{\bC}_{(L)} & = \underset{\bC_{(1)},...,\bC_{(L)}}{\arg \min} \sum_{l=1}^L \frac{1}{2N_l} \|\bA_{(l)} - \bA_{(l)}\bC_{(l)}\|_F^2\\
    & + \rho \sum_{l_1,l_2} s_{l_1,l_2} DCD(\bW_{(l_1)}, \bW_{(l_2)})  + \lambda \sum_{l=1}^L \|\bC_{(l)}\|_1 \\
    &{\rm s.t. \ } h(\bW_{(l)}) = {\rm tr}(e^{\bW_{(l)}}) - P_l = 0, \forall l
    \end{split}
\end{equation}
where $\bW_{(l)ij} = \|\bC_{(l)ij}\|_F^2$, $s_{l_1,l_2}$ is the given constant reflecting the similarity between tasks $l_1$ and $l_2$. The penalty term $DCD(\bW_{(l_1)}, \bW_{(l_2)})$ is defined as  \textbf{D}ifferentiable  \textbf{C}ausal \textbf{D}ifference of the DAGs between task $l_1$ and task $l_2$ (discussed in Section \ref{sec:design-penalty}). $\rho$ controls the penalty of the difference in causal orders, where larger $\rho$ means less tolerance of difference.  $\lambda$ controls the {${L}_1$-norm} penalty of $\bC_{(l)}$ which guarantees that $\bC_{(l)}$ is sparse.

\subsection{Design the Causal Difference} 
\label{sec:design-penalty}
We propose a novel differentiable measure to quantify causal structure difference between two DAGs. First, we introduce the current most commonly used measures for graph structure difference. They are limited when formulating the \textit{transitive} causality between two DAGs (details below). Then we introduce the motivation of \textbf{C}ausal \textbf{D}ifference measure $CD$ and its definition. Finally, we propose $DCD$ as the differentiable $CD$ and discuss its asymptotic properties. 

\textbf{Current metrics for graph structure difference} include spectral distances, matrix distance, feature-based distance \cite{wills2020metrics}. A simple idea is to directly count how many edges are different between two graphs, denoted as $\Delta(\calG_u, \calG_v)$. It is a special case of matrix distance $\|\bW_u - \bW_v\|_0$, and $\bW_u, \bW_v$ is the adjacency matrix of graph $\calG_u, \calG_v$. $\Delta(\calG_u, \calG_v)$ defines the edge difference of $\calG_u$ and $\calG_v$:
\begin{equation} \label{equ:difference}
    \Delta(\calG_u,\calG_v) = \sum_{i \in \calV_u \cap \calV_v} \sum_{j \in \calV_u \cap \calV_v} \mathbb{I}(\mathbb{I}((i,j)\in \calE_u) \ne \mathbb{I}((i,j) \in \calE_v) )
\end{equation}
where $\calV_u, \calV_v \subseteq \{1, 2, \ldots, P \}$ are the node sets of the graph $\calG_u, \calG_v$, respectively. $\mathbb{I}(\cdot)$ is the indicator function. 
If node $i$ and node $j$ both appear in $\calV_u, \calV_v$, the difference is increased by one if the edge $(i,j)$ appears in $\calE_u$ but not in $\calE_v$ and vice versa. However, $ \Delta(\calG_u, \calG_v)$ does not consider the edges of distinct nodes of $\calV_u, \calV_v$. This is reasonable since, in our context of multi-task learning, we only need to penalize the model difference for the shared parts, i.e., the graph structure for the overlapping nodes. 

\textbf{A novel measure considering transitive causality:} $ \Delta(\calG_u, \calG_v)$ performs well if we only focus on the graph structure difference. However, it cannot reveal the transitivity of causal relationships in graphs. We use the three graphs $\calG_a, \calG_b, \calG_c$ in Fig. \ref{fig:situation} to demonstrate this point.
\begin{figure}[t]
    \centering
    \includegraphics[width=0.9\columnwidth]{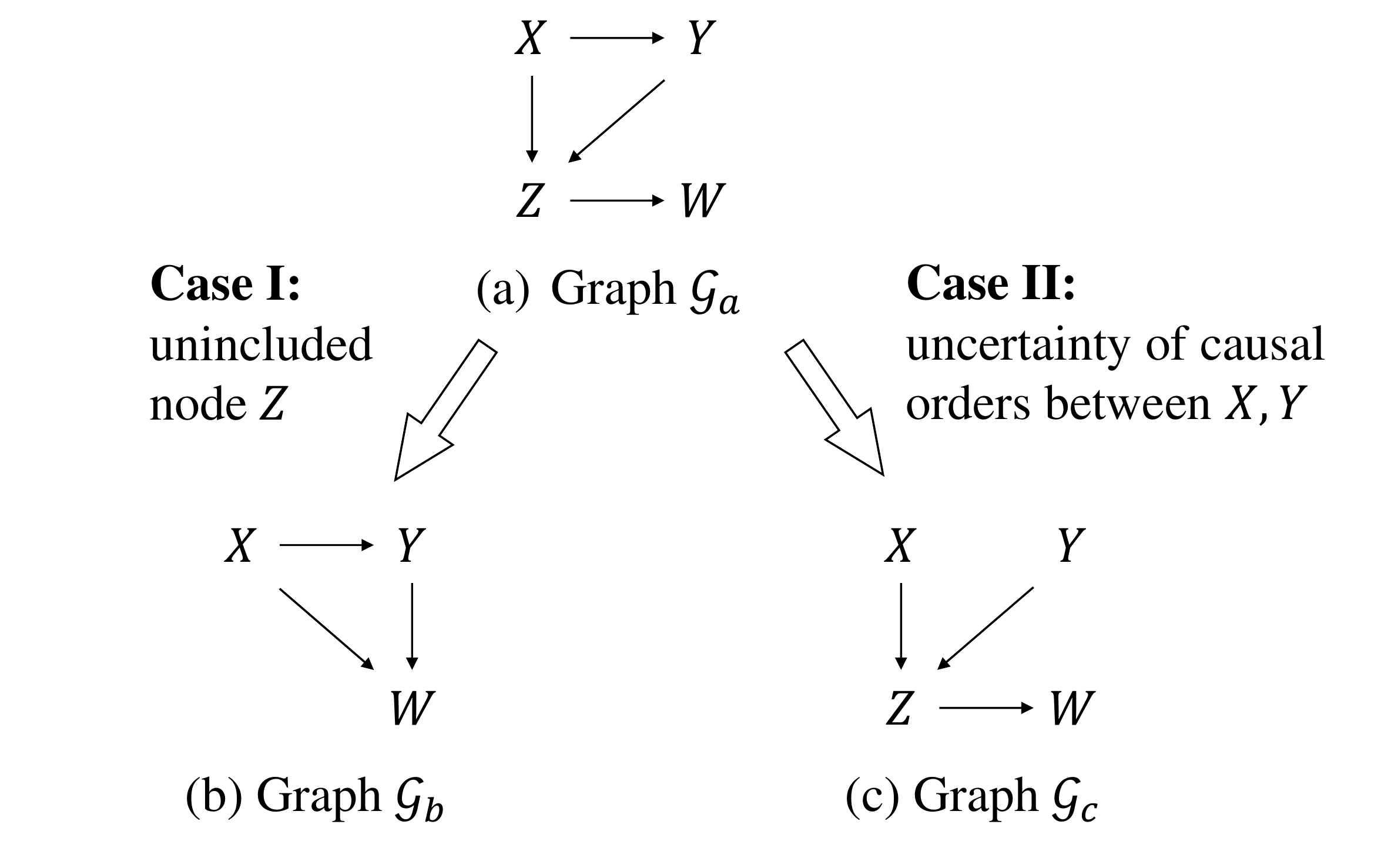}
    \caption{Unincluded nodes \& Uncertainty of causal orders.}
    \label{fig:situation}
\end{figure}

(1) \textbf{Case I:} The difference between $\calG_a$ and $\calG_b$. In this case $\Delta(\calG_a, \calG_b) = 2$ since the edges $X \rightarrow W, Y \rightarrow W$ appear in $\calE_a$, not in $\calE_b$. But from another perspective, if we sort the nodes set by their causal orders, the sorted sequence in $\calG_a$ is $X,Y,Z,W$, and the sorted sequence in $\calG_b$ is $X,Y,W$. If we remove $Z$ in $\calG_a$, the sorted sequence of $\calG_a$ and $\calG_b$ are exactly the same. The edge difference between $\calG_a$ and $\calG_b$ is due to the transitive causality passing $Z$, which is excluded in $\calV_b$. Thus, the ideal \textbf{C}ausal \textbf{D}ifference measure should be $CD(\calG_a, \calG_b)=0$, which is formally defined in Def. \ref{def:causal-difference}. 

(2) \textbf{Case II:} \label{case:2}The difference between $\calG_a$ and $\calG_c$. To solve the problem of \textbf{Case I}, at first glance, we can use causal order \cite{chen2021multi} and kernels for permutation \cite{jiao2015kendall} as a causal difference measure directly. However, it has an uncertainty problem, as shown in Fig. \ref{fig:situation}.(b). In $\calG_c$, the sorted sequence is either $X,Y,Z,W$ or $Y,X,Z,W$, which are equivalent. But in $\calG_a$, the sorted sequence is unique $X,Y,Z,W$. This difference is caused by that there is an edge $X \rightarrow Y$ in $\calG_a$, which determines the causal order that $\pi(X)<\pi(Y)$, but not in $\calG_c$. In this case, the causal difference measure between the two graphs should be considered, i.e., $CD(\calG_a,\calG_c) > 0$.

\textbf{Our design:} The two cases mentioned above motivate us to propose a new measure to evaluate the causal difference. Instead of using causal order, which is a one-dimensional sequence, here we define a transitive causal matrix to better consider causal order with uncertainty.
\begin{definition}[Transitive causal matrix] 
\label{def:transitive}
Define the transitive causal matrix $\calT^*(\calG)$ as:
    \begin{eqnarray*} \label{equ:T-target}
    \small
    \calT^*(\calG)_{ij} \in \mathbb{R}^{|\calV| \times |\calV|} = 
    \begin{cases}
       1  & {\pi(i)<\pi(j)} \text{ for all $\pi$ consistent with $\calG$} \\
       0  & {\pi(i)>\pi(j)} \text{ for all $\pi$ consistent with $\calG$} \\
       0.5 & \text{Otherwise}
    \end{cases} 
\end{eqnarray*}
\end{definition}

We can see that when the causal order of nodes $i$ and $j$ is interchangeable, instead of randomly setting their orders either as $i\to j$ or $j \to i$, we deterministically set their causal relation $ \calT^*(\calG)_{ij}= \calT^*(\calG)_{ji}=0.5$ symmetrically. 

Then we define our $CD$ measure, which is the difference between the overlapping parts of the transitive causal matrices of two graphs: 
\begin{definition}[Causal Difference] \label{def:causal-difference}
    Define the \textbf{C}ausal \textbf{D}ifference 
    between $\calG_u, \calG_v$ as $CD(\calG_u, \calG_v)$ with following formula: 
    \begin{equation} \label{equ:causal-difference}
     CD(\calG_u, \calG_v) = \sum_{i \in \calV_u \cap \calV_v} \sum_{j \in \calV_u \cap \calV_v} (\calT^*(\calG_u)_{ij} - \calT^*(\calG_v)_{ij})^2.
    \end{equation}
\end{definition}
By Definitions \ref{def:transitive} and \ref{def:causal-difference}, we can see $ CD(\calG_u, \calG_v)$ describes the transitive causal difference between DAGs better. 

\begin{figure}[t]
    \centering
    \includegraphics[width=0.8\columnwidth]{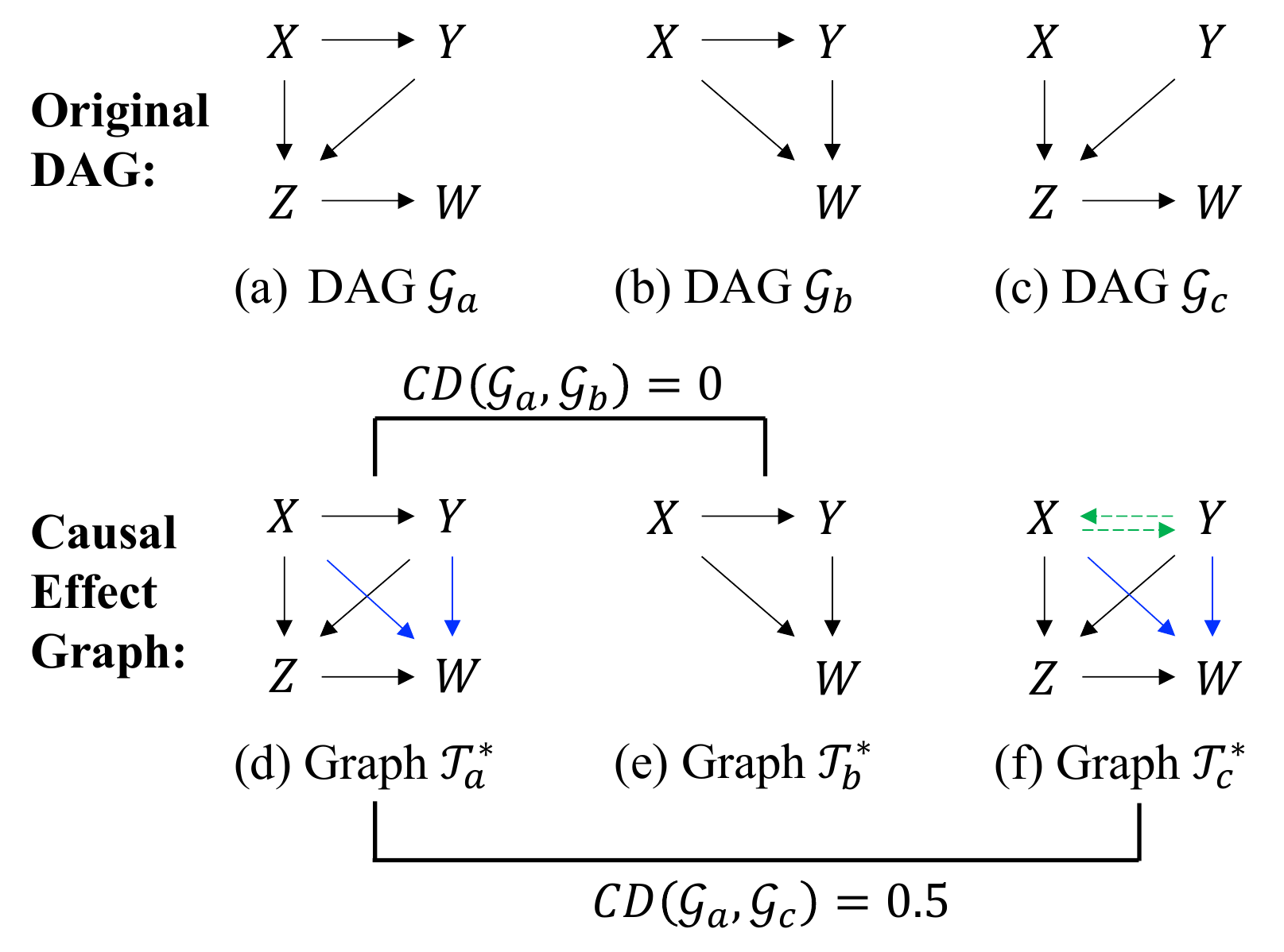}
    \caption{Illustration of the transitive causal matrix. A blue directed edge $i \to j$ represents an added $\calT^*_{ij}=1$. A green directed edges $i \to j$ represents an added $\calT^*_{ij}=0.5$.}
    \label{fig:design}
\end{figure}

Fig. \ref{fig:design} illustrates our design of $\calT^*_a, \calT^*_b$ and $\calT^*_c$, which can be viewed as ``fully-connected'' versions of $\calG_a, \calG_b$ and $\calG_c$. We obtain the causal effect behind the graph $\calG$ and obtain graph $\calT^*$. From Fig. \ref{fig:design}, we show that the edges $X \rightarrow W$ and $Y \rightarrow W$ appear in both $\calT^*_a$ and $\calT^*_b$, which indicates $CD(\calG_a, \calG_b)=0$. Meanwhile, in $\calT_a^*$, the edge $X \rightarrow Y$ is directed with weight 1, but in graph $\calT_c^*$, this edge is bi-directed with weight 0.5. From Eq. (\ref{equ:causal-difference}), $CD(\calG_a, \calG_c) = 0.5^2 + 0.5^2 = 0.5$.

\textbf{Topological Interpretation:} 
We further give a topological interpretation of $\calT^*$ and $ CD(\calG_u, \calG_v)$. By showing that $\calT^*$ lies in a $T_0$ space (or Kolmogorov space \cite{hocking1988topology}), we prove that $CD$ is equivalently defined by the projection and a distance metric of $T_0$ spaces.

\begin{definition}
Define $\calK_\calV$ is the set of $\calT^*$ matrix generated by node set $\calV$, i.e., $\calK_\calV=\{\calT^*(\calG)|\calG=(\calV,\calE), \forall \calE\}$.
\end{definition}

\begin{lemma} \label{the:T0-space}
 $\calK_{\calV}$ is a finite $T_0$ space with $|\calK_\calV|=\alpha(|\calV|)$, where $\alpha(n)$ is the number of distinct $T_0$ topologies with $n$ points, and $\calT^*(\calG) \in \calK_\calV$ corresponds to a unique $T_0$ topology.
\end{lemma}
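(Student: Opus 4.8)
The plan is to recognize that $\calT^*(\calG)$ is merely a $\{0,\tfrac12,1\}$-matrix encoding of the reachability (transitive-closure) order of the DAG $\calG$, and then to invoke the classical Alexandrov correspondence between partial orders on a finite set and $T_0$ topologies on that set. Thus the proof splits into a combinatorial characterization of $\calT^*$ and a chain of elementary bijections.

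First I would establish the key fact: for $i \neq j$, we have $\calT^*(\calG)_{ij} = 1$ if and only if there is a directed path $i \rightarrow \cdots \rightarrow j$ in $\calG$. The ``if'' direction is immediate, since any permutation $\pi$ consistent with $\calG$ respects every edge, hence every directed path. For the converse I would use the standard fact (a finite version of Szpilrajn's theorem) that a strict partial order equals the intersection of its linear extensions: the reachability relation $\prec_\calG$ is a strict partial order (acyclicity gives asymmetry, transitivity is built in), its linear extensions are exactly the $\pi$ consistent with $\calG$ — one inclusion is trivial, the other holds because every edge of $\calG$ lies in $\prec_\calG$ — and therefore $\pi(i) < \pi(j)$ for \emph{all} consistent $\pi$ forces $i \prec_\calG j$. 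Consequently $\calT^*(\calG)$ depends on $\calG$ only through $\prec_\calG$, with entries $1,0,\tfrac12$ according to whether $i \prec_\calG j$, $j \prec_\calG i$, or $i,j$ are incomparable; moreover one reads $\prec_\calG$ back off $\calT^*(\calG)$, so $\prec_\calG \mapsto \calT^*(\calG)$ is a bijection onto $\calK_\calV$.

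Next I would run the chain of bijections. The relations $\prec_\calG$ occurring this way are precisely all strict partial orders on $\calV$: given any strict partial order $P$, the digraph $(\calV,P)$ is acyclic and already transitive, so $P = \prec_{(\calV,P)}$. Hence $\calK_\calV$ is in bijection with the strict partial orders on $\calV$; adjoining the diagonal is a bijection onto the reflexive partial orders on $\calV$; and the Alexandrov correspondence (a partial order $\mapsto$ its up-set topology, with inverse the specialization preorder) is a bijection from partial orders on $\calV$ onto $T_0$ topologies on $\calV$ — this is exactly where $T_0$ enters, matching antisymmetry of the order with the $T_0$ separation axiom. Composing the three bijections shows $\calK_\calV$ is finite, $|\calK_\calV|$ equals the number of $T_0$ topologies on a $|\calV|$-point set, i.e. $\alpha(|\calV|)$, and each $\calT^*(\calG)$ determines and is determined by a unique $T_0$ topology on $\calV$.

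I expect the main obstacle to be the converse in the reachability step, i.e. turning ``$j$ is not reachable from $i$'' into ``some consistent topological order puts $j$ before $i$''; the intersection-of-linear-extensions fact makes this clean, but one must verify carefully that linear extensions of $\prec_\calG$ and topological orders of $\calG$ coincide (the nontrivial inclusion uses that $\calG$'s edges are reachability pairs). A minor bookkeeping point is the diagonal convention: $\calT^*(\calG)_{ii} = \tfrac12$ by Definition~\ref{def:transitive}, so passing to a reflexive partial order adds the diagonal by hand, which does not affect any count. The two order-theoretic bijections and the Alexandrov equivalence are standard and I would cite rather than reprove them.
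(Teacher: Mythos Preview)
Your proposal is correct and follows essentially the same route as the paper: both rest on the classical correspondence between (strict) partial orders on a finite set and $T_0$ topologies on that set, with $\calT^*(\calG)$ recognized as the matrix encoding of the reachability order of $\calG$. The paper's proof is a one-line appeal to \cite{finch2003transitive} for this correspondence, whereas you unpack the argument explicitly (the Szpilrajn step, the surjectivity onto all strict partial orders, and the Alexandrov bijection); your version is more self-contained but not a different approach.
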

\begin{proof}
Since $\calT^*(\calG)$ is a bi-directed transitive causality on the set $\calV$, Lemma \ref{the:T0-space} is proved by \cite{finch2003transitive}. 
\end{proof}
\begin{definition} \label{def:T0-distance}
    Define $D_{\calK_\calV}(\calT^*(\calG_1),\calT^*(\calG_2))=\|\calT^*(\calG_1)-\calT^*(\calG_2)\|_F^2, \forall \calT^*(\calG_1),\calT^*(\calG_2) \in \calK_\calV$, which is a distance metric of space $\calK_\calV$. 
\end{definition}
\begin{definition} \label{def:T0-projection}
    Define the projection function $f_{\calV,\calV'}:\calK_\calV \rightarrow \calK_{\calV'}$ as $f(\calT^*(\calG))= \calT^*(\calG')$ where $\calV' \subseteq \calV, \calG' = (\calV',\calE')$ with $\calE'=\{(i,j) | (i,j) \in \calE, i,j \in \calV' \}$.
\end{definition}

\begin{theorem}[\textbf{Topological interpretation}] \label{the:CD-represent}
    The causal difference $CD$ in Eq. (\ref{equ:causal-difference}) can be represented by the following formula:
    \begin{equation}
        CD(\calG_u,\calG_v) = D_{\calK_\calV}(f_{\calV_u,\calV}(\calT^*(\calG_u)), f_{\calV_v,\calV}(\calT^*(\calG_v)))
    \end{equation}
    where $\calG_u=(\calV_u, \calE_u)$, $\calG_v=(\calV_v, \calE_v)$, and $\calV=\calV_u \cap \calV_v$, $D_{\calK_\calV}$ means the distance metric in space $\calK_\calV$.
\end{theorem}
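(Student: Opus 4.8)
The plan is to unwind both sides of the claimed identity into explicit double sums over the shared node set $\calV=\calV_u\cap\calV_v$ and then reduce the whole statement to one elementary fact about how $\calT^*$ transforms under node deletion. First I would invoke Definition~\ref{def:T0-distance}: for any $\mathcal{M}_1,\mathcal{M}_2\in\calK_\calV$ we have $D_{\calK_\calV}(\mathcal{M}_1,\mathcal{M}_2)=\|\mathcal{M}_1-\mathcal{M}_2\|_F^2=\sum_{i\in\calV}\sum_{j\in\calV}\big((\mathcal{M}_1)_{ij}-(\mathcal{M}_2)_{ij}\big)^2$, so the right-hand side of the theorem becomes $\sum_{i,j\in\calV}\big(f_{\calV_u,\calV}(\calT^*(\calG_u))_{ij}-f_{\calV_v,\calV}(\calT^*(\calG_v))_{ij}\big)^2$. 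The left-hand side, by Definition~\ref{def:causal-difference} and Eq.~(\ref{equ:causal-difference}), is already $\sum_{i,j\in\calV}\big(\calT^*(\calG_u)_{ij}-\calT^*(\calG_v)_{ij}\big)^2$ over the same index set. Comparing the two, it suffices to establish, for each $w\in\{u,v\}$ and every pair $i,j\in\calV$, the entrywise identity $f_{\calV_w,\calV}(\calT^*(\calG_w))_{ij}=\calT^*(\calG_w)_{ij}$; squaring and summing then gives the theorem.

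By Definition~\ref{def:T0-projection}, $f_{\calV_w,\calV}(\calT^*(\calG_w))=\calT^*(\calG_w')$ where $\calG_w'$ is $\calG_w$ restricted to $\calV$, so everything reduces to the claim that \emph{restricting a DAG to a node subset leaves the transitive causal matrix unchanged on the pairs that survive}, i.e.\ $\calT^*(\calG_w')_{ij}=\calT^*(\calG_w)_{ij}$ for all $i,j\in\calV$. To prove this I would first translate Definition~\ref{def:transitive} into reachability terms: in a DAG, $\pi(i)<\pi(j)$ holds for every topological order $\pi$ iff there is a directed path from $i$ to $j$, whereas if neither $i$ nor $j$ is reachable from the other then both relative orders of $i$ and $j$ extend to topological orders. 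Hence $\calT^*(\calG)_{ij}$ equals $1$, $0$ or $0.5$ purely according to the reachability relation of $\calG$ between $i$ and $j$, and in particular $\calT^*$ depends on $\calG$ only through its transitive closure $\bar\calG$. I would then read $\calG_w'$ in Definition~\ref{def:T0-projection} through this closure --- it is the ``fully-connected'' graph $\calT^*$ of Fig.~\ref{fig:design} restricted to $\calV$, equivalently $\bar\calG_w$ restricted to $\calV$ --- because deleting the nodes of $\calV_w\setminus\calV$ from $\bar\calG_w$ retains every arc $(i,j)$ with $i,j\in\calV$ that was present, so reachability between the surviving nodes, hence every entry $\calT^*_{ij}$ with $i,j\in\calV$, is unchanged. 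Plugging this back into the first paragraph gives $CD(\calG_u,\calG_v)=D_{\calK_\calV}(f_{\calV_u,\calV}(\calT^*(\calG_u)),f_{\calV_v,\calV}(\calT^*(\calG_v)))$.

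The hard part will be exactly this reduction, namely pinning down $\calT^*(\calG_w')_{ij}=\calT^*(\calG_w)_{ij}$ for $i,j\in\calV$: a directed path between two nodes of $\calV$ inside $\calG_w$ may pass through a node of $\calV_w\setminus\calV$, so if the projection acted on the \emph{raw} sparse graph $\calG_w$ such a path would vanish and a $1$ entry of $\calT^*$ could collapse to $0.5$. The plan is therefore to argue that $f_{\calV_w,\calV}$ acts on the reachability (preorder) structure --- the $T_0$ space $\calK_\calV$ of Lemma~\ref{the:T0-space} --- so that node deletion and the $\calT^*$ operator commute on transitively-closed representatives; once this compatibility is stated cleanly, everything else is a routine expansion of the Frobenius norm and a rewriting of the definitions. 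I would also check the boundary bookkeeping --- the diagonal convention $\calT^*_{ii}=0.5$ and pairs that become incomparable only after deletion --- but since these contribute identically on both sides they do not affect the identity.
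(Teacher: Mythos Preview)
Your approach is the same as the paper's: expand $D_{\calK_\calV}$ via Definition~\ref{def:T0-distance} as a Frobenius sum over $\calV=\calV_u\cap\calV_v$, and match it entrywise against Eq.~(\ref{equ:causal-difference}). The paper's proof is literally this four-line unfolding, asserting the step $f_{\calV_w,\calV}(\calT^*(\calG_w))_{ij}=\calT^*(\calG_w)_{ij}$ without comment.

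Where you go further is exactly where the paper is silent, and your caution is warranted. Under a literal reading of Definition~\ref{def:T0-projection}, $\calG_w'$ is the induced subgraph on the \emph{raw} edge set $\calE_w$, so a path $i\to k\to j$ with $k\in\calV_w\setminus\calV$ disappears after restriction: take $\calV_w=\{1,2,3\}$ with edges $1\to2$, $2\to3$ and $\calV=\{1,3\}$; then $\calT^*(\calG_w)_{13}=1$ but $\calT^*(\calG_w')_{13}=0.5$, and the entrywise identity fails. Worse, $f$ is not even well-defined as a map on $\calK_{\calV_w}$, since two graphs with the same $\calT^*$ (e.g.\ the above versus its transitive closure) project to different $\calT^*(\calG')$. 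Your proposed reading --- let $f$ act on the transitively closed representative, equivalently restrict the $\calT^*$-matrix itself to $\calV\times\calV$ --- is the only interpretation under which $f$ is a genuine function $\calK_{\calV_w}\to\calK_{\calV}$ and the theorem holds. So the ``hard part'' you flag is not overcaution; it is a real gap in the paper's argument, and your resolution is the correct one.
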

\begin{proof}
\small
\begin{align*}
&D(f_{\calV_u,\calV}(\calT^*(\calG_u)), f_{\calV_v,\calV}(\calT^*(\calG_v))) \\
&= \| f_{\calV_u,\calV}(\calT^*(\calG_u)) - f_{\calV_v,\calV}(\calT^*(\calG_v)) \|_F^2 \\
&= \sum_{i \in \calV} \sum_{j \in \calV} (f_{\calV_v,\calV}(\calT^*(\calG_v))_{ij} - f_{\calV_v,\calV}(\calT^*(\calG_v))_{ij})^2 \\
&= \sum_{i \in \calV_u, \calV_v} \sum_{j \in \calV_u, \calV_v} (\calT^*(\calG_u)_{ij} - \calT^*(\calG_v)_{ij})^2 \\
&= CD(\calG_u, \calG_v)
\end{align*}
\end{proof}

\begin{figure}[t]
    \centering
    \includegraphics[width=\columnwidth]{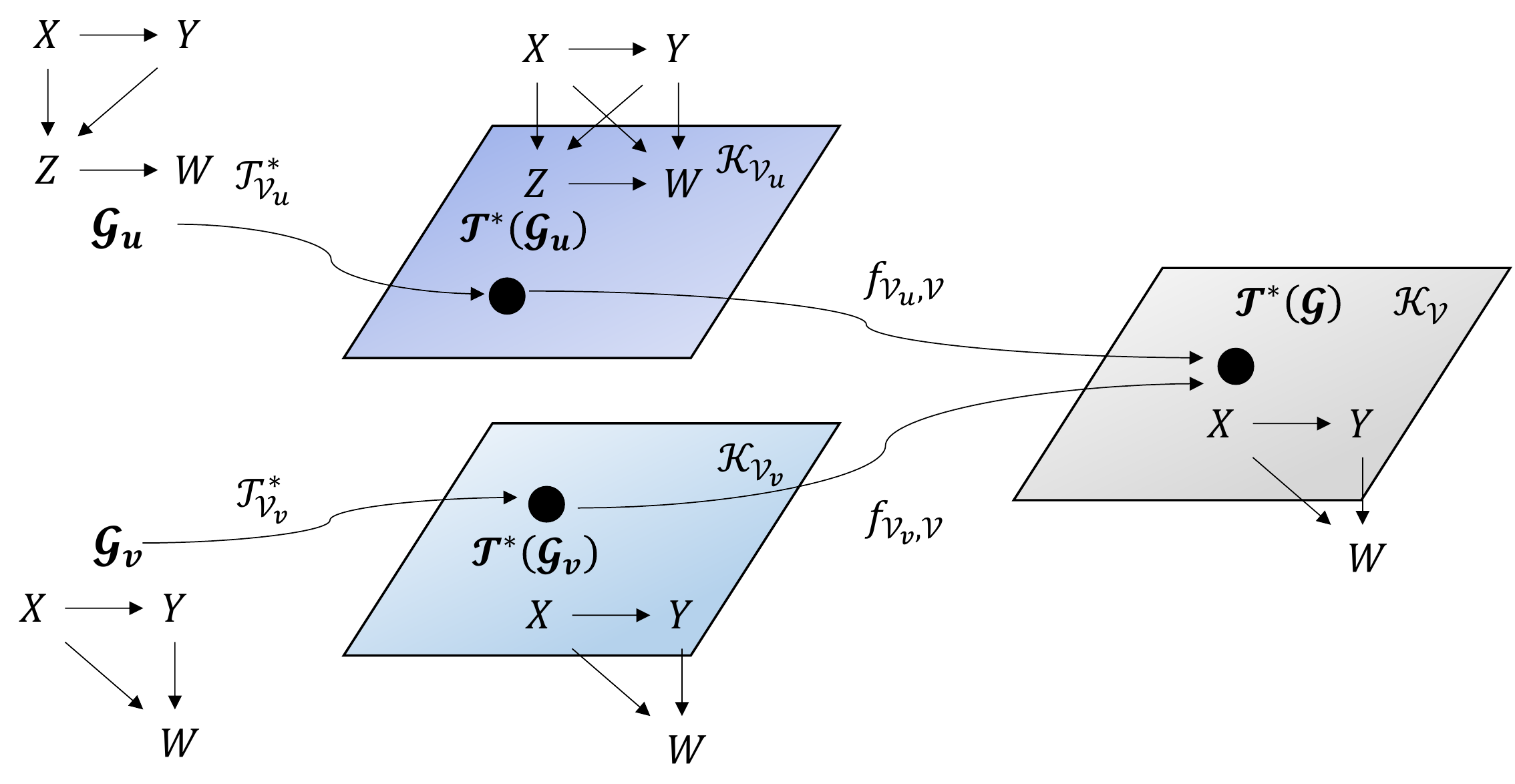}
    \caption{The illustration of our design from a topological perspective. In this case, $CD(\calG_u,\calG_v)=0$ since they correspond to the same $\calT^*(\calG)$ in space $\calK_\calV$.}
    \label{fig:space}
\end{figure}

\noindent Lemma \ref{the:T0-space} shows that our design $\calT^*(\calG)$ lies in a $T_0$ space $\calK_\calV$. Using Def. \ref{def:T0-distance} and \ref{def:T0-projection}, we define the distance metric in $T_0$ space and the projection function of two $T_0$ spaces. Finally, Theorem \ref{the:CD-represent} shows that our difference measure $CD(\calG_u,\calG_v)$ can be represented by the distance in space $\calK_\calV$, where $\calV=\calV_u \cap \calV_v$, as shown in Fig.~\ref{fig:space}.

\textbf{Continuous Trick $\calT$:} Although $\calT^*$ and $CD(\calG_u,\calG_v)$ have such good properties, they are incompatible with the current score-based algorithm in Eq. (\ref{equ:multitask-scorebased}) since $\calT^*$ is discrete thus without gradient. To still guarantee our structure learning algorithm can be solved with gradient-based methods, we further derive a differentiable design $\calT$ as an approximation of $\calT^*$ in Def.~\ref{def:continuous-T}, and also prove the consistency for the conversion.

 \begin{definition}[Differentiable Transitive Causal Matrix] \label{def:continuous-T}
Define the differentiable transitive causal matrix as
     \begin{equation} \label{equ:T-design}
        \calT(\bW) = {\rm S} \left(c(l(\bW) - l(\bW)^T)\right), \text{where } l(\bW) = \bI + \sum_{i=1}^{P} \bW^i.
    \end{equation}
$c$ is a positive constant, $\bW$ is the  adjacency matrix of graph $\calG=(\calV,\calE)$, $\bW_{ij} > 0$ means $(i,j) \in \calE$, function $S$ is the element-wise Sigmoid function for matrix $\bX$ with $S(\bX)_{ij} = \frac{1}{1 + \exp(-\bX_{ij})}$. 
 \end{definition}
 \begin{theorem}[\textbf{Consistency}] \label{the:consistency}
     If~~$\bW$ is the adjacency matrix of graph $\calG=(\calV,\calE)$. The differentiable transitive relation matrix $\calT(\bW)$ converges to the transitive relation matrix $\calT^*(\calG)$ as $c \rightarrow \infty$.
 \end{theorem}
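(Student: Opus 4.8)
The plan is to prove the claim entrywise: for every pair $(i,j)$ I will identify which of the three values $\{0,\tfrac12,1\}$ the number $\calT^*(\calG)_{ij}$ equals, and show that $\calT(\bW)_{ij}$ converges to the same value as $c\to\infty$. Since $\calT(\bW)$ and $\calT^*(\calG)$ are fixed-size $P\times P$ matrices, entrywise convergence is the same as convergence in any matrix norm. The only analytic fact needed about the Sigmoid is the elementary limit $S(cx)\to 1$ for $x>0$, $S(cx)\to 0$ for $x<0$, and $S(cx)=\tfrac12$ for $x=0$, as $c\to\infty$. Because $\calT(\bW)_{ij}=S\!\big(c\,(l(\bW)-l(\bW)^T)_{ij}\big)$, everything therefore reduces to computing the sign of the entry $(l(\bW)-l(\bW)^T)_{ij}$ for each $i,j$.

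First I would characterise $l(\bW)$ combinatorially. The entries of $\bW$ are nonnegative (recall $\bW_{ij}=\|\bC_{ij}\|_F^2\ge 0$), so by the standard walk-counting fact $(\bW^m)_{ij}>0$ iff there is a directed walk of length $m$ from $i$ to $j$; and since $\calG$ is a DAG on $P$ nodes it has no walk of length $\ge P$ (such a walk repeats a vertex, producing a cycle), so the term $\bW^P$ in $l(\bW)=\bI+\sum_{i=1}^{P}\bW^i$ is vacuous. Hence $l(\bW)_{ij}>0$ iff $i=j$ or there is a directed path $i\to\cdots\to j$ in $\calG$ (i.e. $i$ is an ancestor of $j$), and $l(\bW)_{ij}=0$ otherwise. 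Acyclicity of $\calG$ makes the reachability relation ``$i$ is an ancestor of $j$'' a strict partial order on $\calV$, and the topological orders $\pi$ consistent with $\calG$ are precisely the linear extensions of (the transitive closure of) this partial order.

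Then I would run four cases. The diagonal is immediate: $(l(\bW)-l(\bW)^T)_{ii}=0$, so $\calT(\bW)_{ii}=\tfrac12$ for all $c$, matching $\calT^*(\calG)_{ii}=\tfrac12$ by the ``otherwise'' branch of Definition~\ref{def:transitive}. For $i\ne j$: (a) if $i$ is an ancestor of $j$, then $l(\bW)_{ij}>0$ and $l(\bW)_{ji}=0$ (otherwise $\calG$ has a cycle $i\to j\to i$), the entry is strictly positive, $\calT(\bW)_{ij}\to 1$, and every consistent $\pi$ has $\pi(i)<\pi(j)$, so $\calT^*(\calG)_{ij}=1$; (b) if $j$ is an ancestor of $i$, this is symmetric and both sides equal $0$; (c) if $i$ and $j$ are incomparable under reachability, then $l(\bW)_{ij}=l(\bW)_{ji}=0$, the entry is $0$, $\calT(\bW)_{ij}=\tfrac12$, and I must argue $\calT^*(\calG)_{ij}=\tfrac12$, i.e. that some consistent $\pi$ has $\pi(i)<\pi(j)$ and some other has $\pi(j)<\pi(i)$ — which is exactly the Szpilrajn-type fact that two incomparable elements of a finite poset can be placed in either relative order by a suitable linear extension. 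Assembling the four cases gives $\lim_{c\to\infty}\calT(\bW)_{ij}=\calT^*(\calG)_{ij}$ for all $i,j$, hence the stated convergence.

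I expect the only real subtlety to be case (c): one must be careful that ``the relative order of $i$ and $j$ is not forced by $\calG$'' is equivalent to ``$i$ and $j$ are incomparable in the transitive closure of $\calE$'', and then produce the two witnessing linear extensions. The walk-counting description of $l(\bW)$ in the second step is routine, but should be stated precisely enough that the (empty) $\bW^P$ contribution causes no confusion. Everything else is the one-line Sigmoid limit applied coordinate by coordinate.
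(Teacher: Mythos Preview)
Your proposal is correct and follows essentially the same approach as the paper: an entrywise case analysis on the sign of $l(\bW)_{ij}-l(\bW)_{ji}$ combined with the elementary Sigmoid limit. Your treatment is in fact slightly more careful than the paper's---you handle the diagonal explicitly and invoke the Szpilrajn-type extension for incomparable pairs, whereas the paper's proof simply asserts that its case (3) corresponds to the ``otherwise'' branch of Definition~\ref{def:transitive} without exhibiting the two witnessing linear extensions.
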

 \begin{proof}
     In Eq. (\ref{equ:T-design}), $l(\bW)=\bI+\bW+\bW^2+\cdots$, where the entries of matrix power $\bW^k_{ij}$ are the sum of the weight products along all $k$-step paths from node $i$ to node $j$. Therefore, $l(\bW)_{ij}=0$ means that node $j$ cannot be reached from node $i$ in graph $\bW$, $l(\bW)_{ij} > 0$ means that node $j$ can be reached from node $i$ in graph $\bW$. Since $\bW$ is acyclic, $l(\bW)_{ij}$ and $l(\bW)_{ji}$
     have three cases:
     
\noindent (1) $l(\bW)_{ij}>0, l(\bW)_{ji}=0$, representing the case $\pi(i)<\pi(j)$: 
    \begin{equation*}
        \lim_{c \rightarrow \infty}\calT(\bW)_{ij} = \lim_{c \rightarrow \infty}{\rm Sigmoid} \left(cl(\bW)_{ij}\right) = 1 = \calT^*(\calG)_{ij}
    \end{equation*}
(2) $l(\bW)_{ij}=0, l(\bW)_{ji}>0$, representing the case $\pi(i)>\pi(j)$:
    \begin{equation*}
        \lim_{c \rightarrow \infty}\calT(\bW)_{ij} = \lim_{c \rightarrow \infty}{\rm Sigmoid} \left(-cl(\bW)_{ji}\right) = 0 = \calT^*(\calG)_{ij}
    \end{equation*}
    
\noindent (3) $l(\bW)_{ij}=0, l(\bW)_{ji}=0$, representing the case that the relationship between $\pi(i)$ and $\pi(j)$ is not sure, and $\calT(\bW)_{ij}$ = ${\rm Sigmoid}(0)=\calT^*(\calG)_{ij}=0.5$.

Combining cases (1) to (3):
\begin{equation*}
    \lim_{c \rightarrow \infty} \calT(\bW) = \calT^*(\calG)
\end{equation*}
 \end{proof}
 Theorem \ref{the:consistency} proves the consistency of $\calT$ and $\calT^*$ when $c\rightarrow \infty$. In the algorithm, $c$ can be set to a relatively large constant and avoid floating point overflow. Therefore, the \textbf{D}ifferentiable \textbf{C}ausal \textbf{D}ifference $DCD$ is given by:
\begin{equation} \label{equ:continuous-CD}
     DCD(\bW_u, \bW_v) = \sum_{i \in \calV_u, \calV_v} \sum_{j \in \calV_u, \calV_v} (\calT(\bW_u)_{ij} - \calT(\bW_v)_{ij})^2,
\end{equation}
which is used in our multi-task score-based algorithm in Eq. (\ref{equ:multitask-scorebased}).

\subsection{Structural Learning Algorithm}
\label{sec:algorithm}
To solve Eq. (\ref{equ:multitask-scorebased}), following the algorithm proposed by \cite{zheng2018dags}, we derive a structural learning algorithm based on the Lagrangian method with a quadratic penalty, which converts the score-based method in Eq. (\ref{equ:multitask-scorebased}) to an unconstrained problem:
\begin{equation} \label{equ:Lagrangian dual problem}
\small
    \begin{split}
        F(\bC_{(1)},...,\bC_{(L)}) = &\underset{\bC_{(1)},...,\bC_{(L)}}{\min} \max_{\beta>0}~f(\bC_{(1)},...,\bC_{(L)}) \\
        & \quad + \sum_{l=1}^L \beta h(\bW_{(l)}) + \frac{\alpha^2}{2} h(\bW_{(l)})^2, \\
    \end{split}
\end{equation}
where
\begin{equation}
\small
    \begin{split}
       f &= \sum_{l=1}^L \frac{1}{2N_l} \|\bA_{(l)} - \bA_{(l)}\bC_{(l)}\|_F^2 + \rho \sum_{l_1,l_2} s_{l_1,l_2}DCD(\bW_{(l_1)}, \bW_{(l_2)})  \\
       & \quad + \lambda \sum_{l=1}^L \|\bC_{(l)}\|_1.
    \end{split}
\end{equation}

    
    
    
\noindent $\beta$ is dual variable, $\alpha$ is the coefficient for quadratic penalty. We solve the dual problem by iteratively updating $f(\bC_{(1)},...,\bC_{(L)})$ and $\beta$. Due to the smoothness of objective 
$F$, Adam \cite{kingma2014adam} is employed to minimize $F$
given $\beta$, and update $\beta$ by $\beta \leftarrow \beta + \alpha \sum_{l=1}^L h(\bW_{(l)})$. The overall steps are summarized in Algorithm \ref{alg:Multi-task learning algorithm} and the convergence property of our algorithm is fully discussed by \citet{ng2022convergence}. The partial derivative function of $F$ for $\bC_{(1)}, \ldots, \bC_{(L)}$ are computed by the following three parts:

\noindent (1) Derivative of $\|\bA_{(l)} - \bA_{(l)}\bC_{(l)} \|_F^2$:
   \begin{equation*}
        \frac{\partial \|\bA_{(l)} - \bA_{(l)}\bC_{(l)} \|_F^2}{\partial \bC_{(l)}} = -2 \bA^T_{(l)} (\bA_{(l)} - \bA_{(l)}\bC_{(l)}).
    \end{equation*}
    
\noindent (2) Derivative of $h(\bW_{(l)})$:
    \begin{equation*}
    \begin{split}
        \frac{\partial h(\bW_{(l)})}{\partial \bC_{(l)ij}} &= \frac{\partial h(\bW_{(l)})}{\partial \bW_{(l)}} \frac{\partial \bW_{(l)}}{\partial \bC_{(l)ij}}.
    \end{split}
    \end{equation*}
    where $\frac{\partial h(\bW_{(l)})}{\partial \bW_{(l)}} = e^{\bW_{(l)}}$, $\frac{\partial \bW_{(l)}}{\partial \bC_{(l)ij}}$ can be obtained from the definition $\bW_{(l)ij}=\|\bC_{(l)ij}\|_F^2$.
    
    \vspace{5pt}
\noindent (3) Derivative of $DCD(\bW_{(l_1)}, \bW_{(l_2)})$:
    \begin{equation}
    \small
    \begin{aligned}
        &\frac{\partial DCD(\bW_{(l_1)}, \bW_{(l_2)})}{\partial \bC_{(l)ij}} = \frac{\partial DCD}{\partial l(\bW_{(l_1)})} \frac{\partial l(\bW_{(l_1)})}{\partial \bW_{(l_1)}} \frac{\partial \bW_{(l_1)}}{\partial \bC_{(l_1)ij}}\\\vspace{5pt}
        &\frac{\partial DCD}{\partial l(\bW_{(l_1)})_{ij}} =  2Q\left(cl(\bW_{(l_1)})_{ij}-cl(\bW_{(l_1)})_{ji}, cl(\bW_{(l_2)})_{ij}-cl(\bW_{(l_2)})_{ji} \right)\\\vspace{5pt}
        & \textrm{where  } Q(x,y) = \frac{2c e^x (e^x - e^y)}{(1 + e^x)^3 (1 + e^y)}\\
        &\frac{\partial l(\bW_{(l_1)})_{kl}}{\partial \bW_{(l_1)ij}} = \sum_{p=1}^P \sum_{r=1}^p (\bW_{(l_1)}^{r} \bJ_{ij} \bW_{(l_1)}^{(p-r-1)})_{kl}
    \end{aligned}
    \end{equation}
    where $\bJ_{ij}$ is a $P_{l_1} \times P_{l_1}$ matrix with $(\bJ_{ij})_{ij}=1$ and 0 in other entries. Denote $P=\max P_l$ and $M=\max M_l$, in each Adam iteration, the overall computation complexity is $O(LNM+L^2P^2M^2+LP^6)$. The detailed math is in Appx. \ref{app:complexity}.

\begin{algorithm}[t]
\caption{Multi-task learning algorithm}
\label{alg:Multi-task learning algorithm}
\begin{algorithmic}
   \STATE {\bfseries Input:} Number of task $L$, multi-task data $\bX_{(1)}, \ldots, \bX_{(L)}$, similarity $s_{ij}$, learning rate $r$, tolerance $h_{\min}$
   \STATE Obtain $\bA_{(l)}$ from $\bX_{(l)}$ via Eq. (\ref{equ:A})
   \STATE Initialize $\bC_{(1)},\bC_{(2)},...,\bC_{(L)}, \alpha \leftarrow 1, \beta \leftarrow 0$
   \REPEAT
   \STATE  $\bC_{(1)},\bC_{(2)},...,\bC_{(L)} \leftarrow \text{Adam}(\bA_{(1)},\bA_{(2)},...,\bA_{(L)})$. \# Use Adam to solve Lagrangian dual problem in Eq.  (\ref{equ:Lagrangian dual problem})
   \STATE $\beta \leftarrow \beta + \alpha \sum_{l=1}^L h(\bW_{(l)})$
   \STATE $\alpha \leftarrow r\alpha$
   \UNTIL{$\sum_{l=1}^L h(\bW_{(l)}) < h_{\min}$}
\end{algorithmic}
\end{algorithm}

\subsection{Extension to nonlinear cases}
\label{sec: non-linear}
Our model can be extended to nonlinear models with ease. To model a nonlinear system, we would need to design two components. Firstly, we develop the transition function in DAG, which can be expressed as $\mathbb{E}(X_j|X_{pa_j})=g_j(f_j(X))$, where $f_j:\mathbb{R}^T\rightarrow\mathbb{R}$ and $g_j:\mathbb{R}\rightarrow\mathbb{R}$. In our design, we utilized mo2mo regression to construct $f_j$ and $g_j$. However, these functions can also be constructed using kernel or deep methods such as graph neural network \citep{yu2019dag}. Secondly, we would need to construct an adjacency matrix of the causal graph, W, that satisfies the condition $f_{ij}\ne 0 \rightarrow W_{ij}>0$. An easy way to achieve this is by setting $W_{ij}=||f_{ij}||_{L^2}$. Then the objective loss function can be constructed and the Notears constraints can be added. By following this procedure, our multitask design with the CD constraints can also be added. Consequently, our multi-task learning framework can be easily extended to nonlinear models.

\section{Experimental study}
\label{sec:experiment}
\subsection{Synthetic Data}

\begin{table}[t]
\small
\centering
\caption{\label{tab:numerical-task-setting} The task settings}
\vspace{-10pt}
\begin{tabular}{ccc}
\toprule
Model & Multi-modal design & Multi-task method \\
\midrule
MM-DAG & Yes & Causal Difference \\
Separate & Yes & None \\
Matrix-Difference & Yes & Matrix Difference \\
Order-Consistency & Yes & Order Consistency \\
MV-DAG & No & Causal Difference \\
\bottomrule
\end{tabular}
\end{table}

The set of experiments is designed to demonstrate the effectiveness of MM-DAG. We first generate a ``full DAG'' $\calG_0$ with $P$ nodes: $\calG_0 \in \{0, 1\}^{P \times P}$ with $[\calG_0]_{ij} = \mathbb{I}([e^{\calG}]_{ij}>0)$, where $\calG$ is generated by Erdös-Rényi random graph model \cite{erdos1960evolution}. Nodes $1, 2, \ldots, \lfloor P/2\rfloor$ are set as scalar variables, and nodes $\lfloor P/2+1\rfloor, \ldots, P$ are functional variables with the same $K$ Fourier bases $\nu_1(t), \ldots, \nu_K(t)$. Therefore, the node variables can be represented by: 
\begin{eqnarray}
    \bX^{(n)}_j = 
    \begin{cases}
       \ba ^{(n)}_j  & {\rm for \ scalar \ nodes} \\
       \sum_{k=1}^K a_{jk}^{(n)} \nu_k(t)  & {\rm for \ functional \ nodes} 
    \end{cases} ,
\end{eqnarray}
{Then, we sample $L$ sub-graphs from $\calG_0$ as different tasks. For task $l$ with $\calG_{(l)}$, we randomly select node set $\calV_{(l)} \subseteq \{1,2,...,P\}$ with $P/2 \leq |\calV_{(l)}| \leq P$, and denote node $i$ in task $l$ as $node[l,i]$, that is, $\{node[l,i]|i\in\calG_{(l)}\} = \calV_{(l)}$. }
To generate $N$ samples for $L$ tasks, we first generate $\bC_{(l)ij} = c_{(l)ij} * \bI_{(l)ij} * [\calG_0]_{node[l,i], node[l,j]}$, where $c_{(l)ij}$ is sampled from the uniform distribution $\mathcal{U}(-2, -0.5) \cup (0.5,2)$ and $\bI_{(l)ij}$ is a matrix whose diagonal is 1, with the dimension the same as $\bC_{(l)ij}$. Thus, we ensure the causal consistency of tasks $\calG_{(l)}$ generated by $\calG_0$. Then we generate $\ba_{j}^{(n)}$ according to Eq. (\ref{equ:A}) with $\tilde{\be}_{j}^{(n)}\sim N(\mathbf{0},\mathbf{I})$. 

For evaluation, F1-score (F1), false positive rate (FPR), and true positive rate (TPR) \cite{raschka2014overview} are employed as the quantitative metrics. Higher F1 ($\uparrow$) and TPR ($\uparrow$) indicate better performance, whereas FPR is reversed ($\downarrow$). 
We compare our MM-DAG with four baselines. (1) \textbf{Separate} based on NoTears \cite{zheng2018dags} is to learn the multi-modal DAG for each task separately by optimizing Eqs. (\ref{equ:score-based}) and (\ref{equ:acylic}).  (2) \textbf{Matrix-Difference} is to use the matrix distance $\Delta$ as the difference measure in the multi-task learning algorithm, which has limitations to handle \textbf{Case I}. (3) \textbf{Order-Consistency} is the multi-task causal graph learning of \cite{chen2021multi}, which assumes all the tasks have the same causal order.  It has limitations in dealing with \textbf{Case II} (See Fig.~\ref{fig:situation}). (4) \textbf{MV-DAG}: Instead of mo2mo regression, MV-DAG implements a preprocessing method for functional data by dividing the entire time length of the function into 10 intervals and averaging each interval. This transforms each functional data into a ten-dimensional vector. We show the difference between the five models in Table \ref{tab:numerical-task-setting}.

\begin{figure}[t]
    \centering
    \includegraphics[width=\columnwidth]{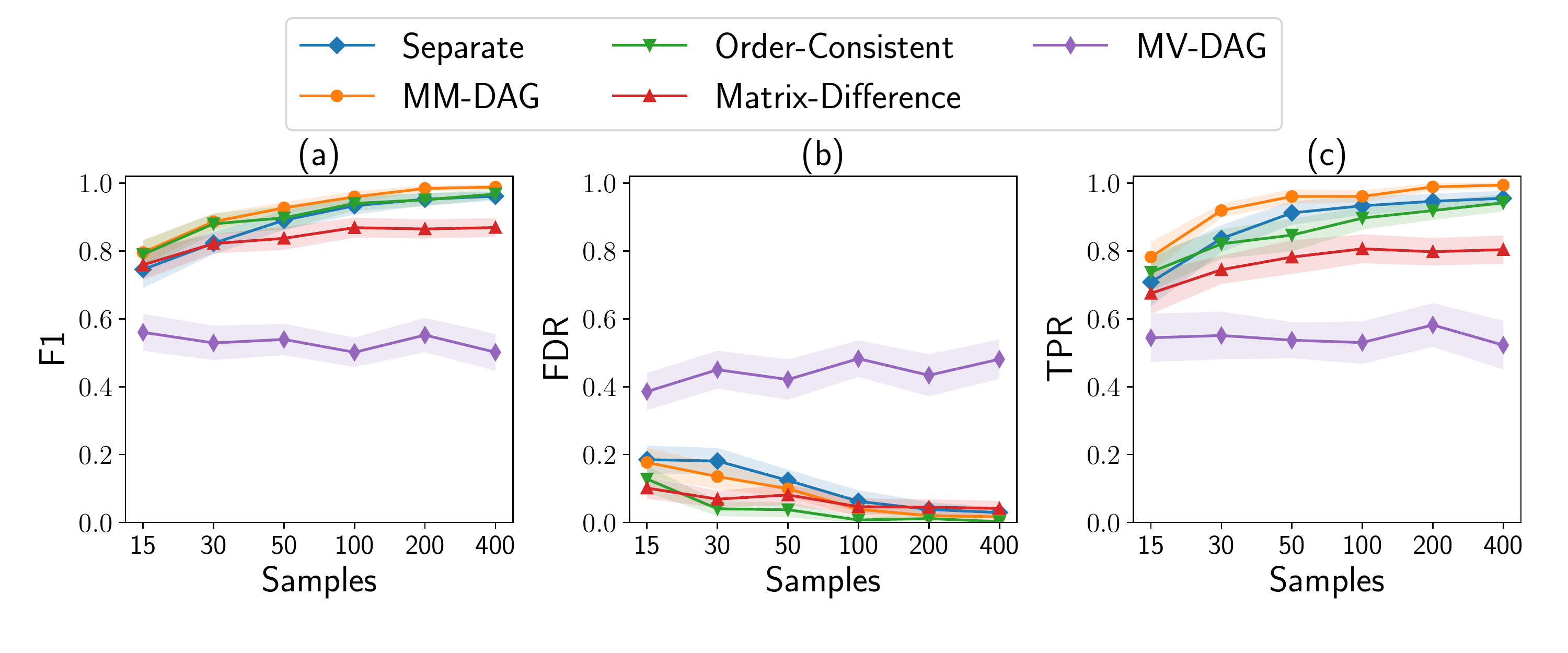}
    \caption{The F1 score ($\uparrow$), FPR ($\downarrow$), TPR ($\uparrow$) across the sample sizes (task number $L=4$), under 95\% confidence interval.}
    \label{fig:result-samples}
\end{figure}

\begin{figure}[t]
    \centering
    \includegraphics[width=\columnwidth]{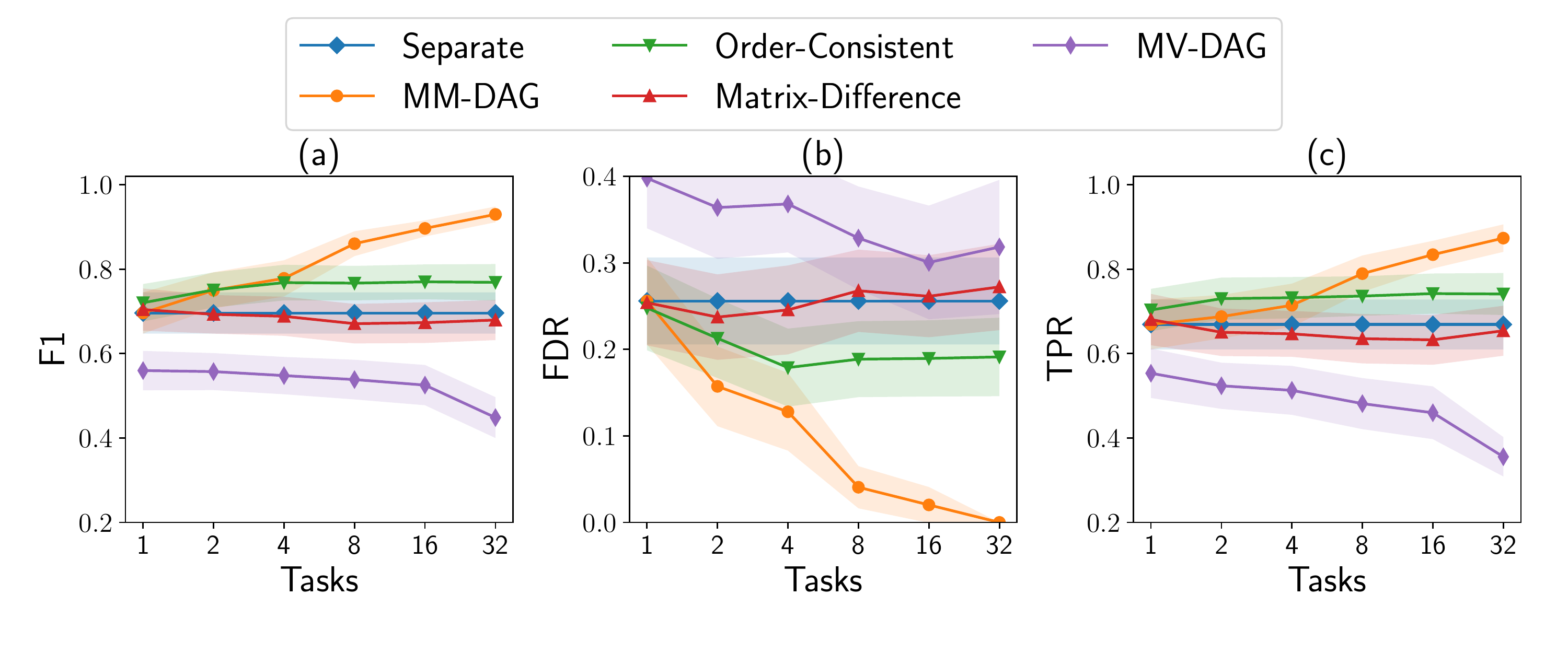}
    \caption{The F1 score ($\uparrow$), FPR ($\downarrow$), TPR ($\uparrow$) across the number of tasks (sample size $N=10$), under 95\% confidence interval.} 
    \label{fig:result-design}
\end{figure}

\textbf{Relationship between model performance and sample size:} 
We first fix the number of tasks (DAGs) $L=4$. The evaluation metrics are shown in Fig.~\ref{fig:result-samples} under different sample sizes. We see that MM-DAG outperforms the baselines, with the highest F1 score and a \textbf{+2.95\%} gain than the best peers, i.e., \textit{order-consistency}, when $N=50$, and \textbf{+11.9\%} gain than \textit{Matrix-Difference} when $N=200, 400$. The performance of the four methods improves as the number of samples $N$ increases. Notably, we discover that the F1 score of baseline \textit{Matrix-Difference} is stable at 0.83 even if we increase $N$ from 100 to 400. This is attributed to biased estimates caused by the fact that the matrix difference incorrectly penalizes the correct causal structure of the task. This bias cannot be reduced by increasing the number of samples. Thus, the F1 score of \textit{Matrix-Difference} cannot reach 100\%. By comparing our proposed MM-DAG model to the MV-DAG model, we verify the contribution of the multi-modal design.

\textbf{Relationship between model performance and the number of tasks:}
We set the sample size $N=10$, and investigate the effect of task size $L$. The results are shown in Fig.~\ref{fig:result-design}, from which the salient benefits of our proposed MM-DAG can be concluded: As the number of tasks increases, the performance of our method improves the fastest, but the baseline \textit{Separate} holds. Promisingly, our method gains a maximum \textbf{+16.1\%} gain on F1 against its best peers, i.e., \textit{order-consistency}, when $L=32$. It can successfully exploit more information in multi-task learning since it can better deal with the uncertainty of causal orders. All of those demonstrate the superiority of MM-DAG. 

\textbf{Visualization:} We also visualize the learned DAGs in Fig. \ref{fig:heatmap-tasks}, which shows the estimated adjacent matrix (edge weights) $\bW_{l}$ of {MM-DAG}, \textit{Order-Consistency}, and \textit{Matrix-Difference}.
MM-DAG derives the most accurate results, which indicates that it achieves the best performance in estimating the DAG structure. 

Appx. \ref{app:comparison} shows the detailed experiments result in the case that $N=20, L=10$ and shows that our MM-DAG have best F1 score. 

\textbf{Ablation study of CD penalty and L1-norm penalty:} We conduct an ablation study with number of nodes $N=20$ and number of tasks $L=10$. The results of MM-DAG$(\lambda=0.01,\rho=0.1)$, MM-DAG$(\lambda=0,\rho=0.1)$ and MM-DAG$(\lambda=0.01,\rho=0)$ are shown as follows. The result shows that the $L_1$ penalty imposed by $\lambda$ can reduce overfitting and reduce FDR. The causal difference penalty imposed by $\rho$ can combine task information to decrease FDR and increase TPR.

\begin{table}[th]
\small
\centering
\caption{\label{tab:ablation-study} The ablation study result ($N=20, L=10$)}
\vspace{-10pt}
\begin{tabular}{ccccc}
\toprule
$\lambda$ & $\rho$ & ${\rm FDR}(\pm \rm std)$ &	$\rm TPR(\pm \rm std)$ & $\rm F1(\pm \rm std)$\\
\midrule
0.01 & 0.1 & $\mathbf{0.077\pm0.071}$ &	$\mathbf{0.942 \pm0.075}$ &	$\mathbf{0.929\pm0.051}$ \\
0.00 & 0.1 &$0.517\pm0.029$	&$0.961\pm0.051$&	$0.604 \pm 0.029$ \\
0.01 & 0.0 & $0.344\pm0.123$ &	$0.795\pm0.118$	& $0.709\pm0.097$\\
\bottomrule
\end{tabular}
\vspace{-0.5cm}
\end{table}

\begin{figure}[t]
    \centering
    \includegraphics[width=0.95\columnwidth]{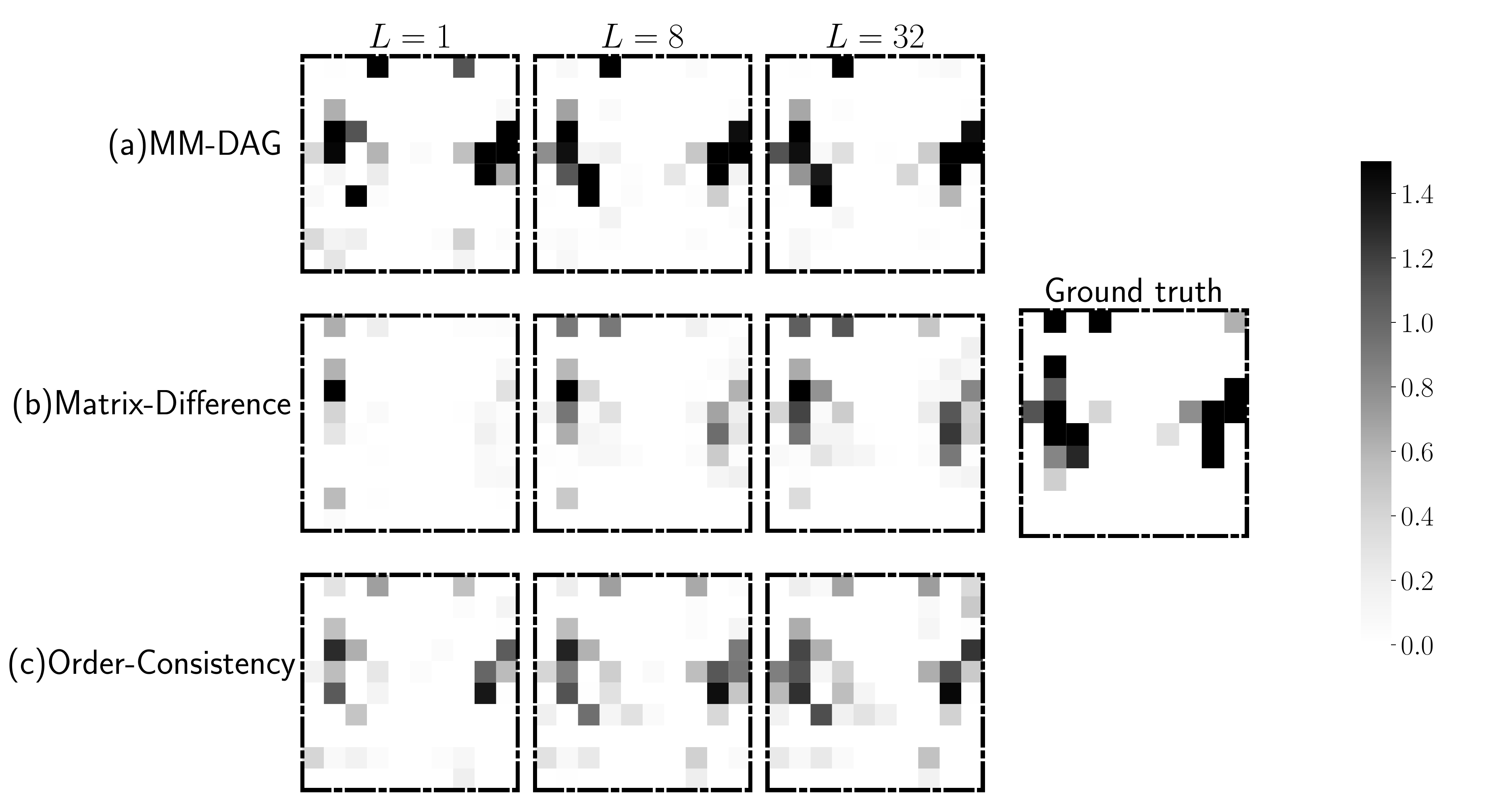}
    \caption{The estimated adjacent matrix $\bW\in \mathbb{R}^{10\times 10}$ of task 1 by three multitask methods with different task numbers $L$. }
    \label{fig:heatmap-tasks}
\end{figure}


\subsection{Congestion Root Causes Analysis}
For the traffic scenario application, we apply our method to analyze the real-world congestion causes of five intersections in FenglinXi Road, Shaoxing, China, including four traffic light-controlled intersections and one traffic light-free intersection, as shown in Fig. \ref{fig:newsumoresult} in Appendix \ref{app:newsumoscenario}). The original flow is taken from the peak hour around 9 AM. We reconstruct the exact flow given our real data. According to reality, the scenario is reproduced in the simulation of urban mobility (SUMO) \cite{behrisch2011sumo}.  There are three types of variables in our case study, as summarized in Table \ref{tab:Abbreviation-func}: (1) The scalar variables $X$, such as Origin-Destination (OD) or intersection turning probability, represent the settings of SUMO environment and can be adjusted. (2) The functional variables $Y(t)$ represent the traffic condition variables such as mean speed or occupation. (3) The vector variables $R$ represent the congestion root cause. Since these types of variables are obtained at a lower frequency compared with the traffic condition variables, we can regard them as vectors.  For each sample, we set different levels on each variable of $X$, then $Y$s are collected by the sensor in SUMO, and $R$s are obtained with rule-based algorithms. 
\begin{table}[t]
\small
\centering
\caption{Node descriptions}
\vspace{-10pt}
\begin{tabular}{cccc}
\toprule
Node & Abbreviation & Description & Type \\
\midrule
$X_1$ & OD & OD demand & \multirow{3}*{Scalar}\\
$X_2$ & Turn & Turning probability & \\
$X_3$ & CT & Cycle time of traffic light & \\
\midrule
$R_1 \in \mathbb{R}^5$ & Cycle-L & Long cycle time of traffic light & \multirow{5}*{Vector} \\
$R_2 \in \mathbb{R}^5$ & Cycle-S & Short cycle time of traffic light & \\
$R_3 \in \mathbb{R}^2$ & Lanes-irr & Irrational guidance lane & \\
$R_4 \in \mathbb{R}^5$ & Phase-irr & Irrational phase sequence & \\
$R_5 \in \mathbb{R}^4$ & Congest & Congestion & \\
\midrule
$Y_1(t)$ & OC & Occupancy of the intersection  & \multirow{2}*{Function} \\
$Y_2(t)$ & MS & Mean speed of the intersection & \\
\bottomrule
\end{tabular}
\label{tab:Abbreviation-func}
\end{table}

The characteristics of five intersections are summarized in Table \ref{tab:task-setting}. The second intersection has no traffic light; thus, it has only six nodes without traffic light-related variables $X_3, R_1, R_2, R_4$. Since the number of lanes is different, the number of $X$ varies across tasks, which leads to a different number of samples. 


\begin{table}[t]
\small
\centering
\caption{\label{tab:task-setting} The task settings}
\vspace{-10pt}
\begin{tabular}{ccccc}
\toprule
Task & Lanes & Has traffic light? & Nodes & Samples \\
\midrule
1 & 15 & Yes & 10 & 1584\\
2 & 6 & No & 6 & 324\\
3 & 12 & Yes & 10 & 1296\\
4 & 6 & Yes & 10 & 972\\
5 & 13 & Yes & 10 & 1296\\
\bottomrule
\end{tabular}
\end{table}


Practically, traffic setting variables affect the congestion situations, and the different types of congestion can lead to changes in traffic condition variables. Therefore, it is assumed that there is only a one-way connection from $X$ to $R $ and from $R $ to $Y$ (This hierarchical order, i.e., scaler $\to$ vector $\to$ functional data, is only specific to this domain, which should not be generalized to other domains). Furthermore, considering the setting variables are almost independent, there are no internal edges between different $X$s. Additionally, we assume that some congestion causes may produce others, which should be concerned about. To this end, the interior edges in $R$ are retained when estimating its causal structure. 

\begin{figure}[t]
    \centering
    \includegraphics[width=0.99\columnwidth]{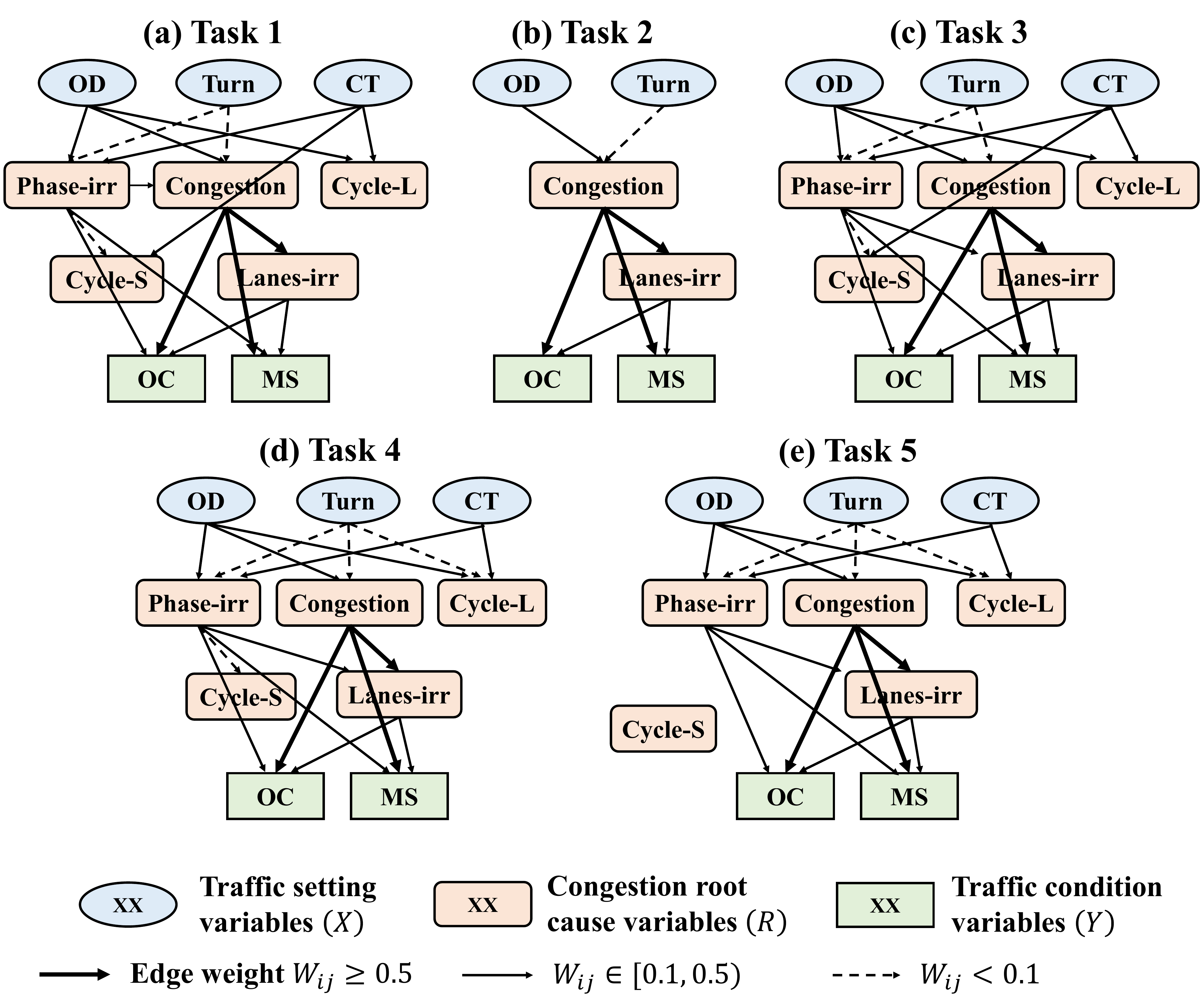}
    \caption{The hierarchical illustrations of inferred DAGs for the traffic application. Each intersection is treated as a task.}
    \label{fig:multi-task}
\vspace{-10pt}
\end{figure}

In the multi-task settings, we assign the task similarity $s_{i,j}$ as the inverse of the physical distance between intersection $i$ and intersection $j$. For the functional PCA, the number of principal components is chosen as $K=5$. Fig. \ref{fig:multi-task} shows the results of our multi-task learning algorithm.
One can figure out the results by analyzing the points of commonalities and differences in the 5 tasks. The variables (nodes) in each task (DAG) are divided into three hierarchies, i.e., $X, R $, and $Y $ for better illustration. 

We can find some interesting insights from the results. For the four intersections with traffic lights, the causal relationships are similar to local differences. Generally, for edges from \textbf{$X$ $\rightarrow$ $R$}, changes in \textit{OD demand} affect \textit{traffic congestion}, \textit{irrational phase sequences}, and \textit{long cycle times}. \textit{Turning probability} adjustments can slightly result in \textit{congestion} and \textit{irrational phase sequences} with lower likelihoods, whereas traffic light adjustments may cause \textit{long signals} or \textit{short signal times} and \textit{irrational phase sequences}. For edges from \textbf{$R$ $\rightarrow$ $R$}, we can see both \textit{irrational phase sequences} and \textit{congestion} may lead to an \textit{irrational guidance lane}. For edges from \textbf{$R$ $\rightarrow$ $Y$}, \textit{congestion}, \textit{irrational phase sequences}, and \textit{irrational guidance lanes} can cause high \textit{occupancy} and yet low \textit{speed}. It is to be noted that for tasks 4 and 5, the \textit{cycle time} of the traffic light will not lead to its \textit{short cycle time}. This might be because they are three-way intersections and have \textit{smaller traffic flows}. Consequently, short cycle time may not occur. 
For the traffic light-free intersection Task-4, its causal relations are the same as the overlapping parts of the other four. 

We can draw some primary conclusions from Fig. \ref{fig:multi-task}(a) that: (1) The change of \textit{OD demand} is the most critical cause for traffic congestion, whereas the impact of turning probability on it is slight (edge weight $< 0.1$). (2) \textit{Cycle time} does not directly cause congestion, but sometimes it can produce \textit{irrational phase sequence} and thus cause congestion indirectly.

In Appendix \ref{app:newsumoscenario}, we further test our model when dealing with a more complex and realistic case where all the intersections are connected and interdependent. 

\section{Conclusion}
\label{sec:conclusion}
This paper presents the multi-task learning algorithm for DAG to deal with multi-modal nodes. It first conducts \textit{mulmo2} regression to describe the linear relationship between multi-modal nodes. Then we propose a score-based algorithm for DAG multi-task learning. We propose a new $CD$ function and its differentiable form to measure and penalize the difference in causal relation between two tasks, 
better formulating the cases of unincluded nodes and uncertainty of causal order. We give important theoretical proofs about topological interpretation and the consistency of our design. The experiments show that our MM-DAG can fuse the information of tasks and outperform the separate estimation and other multi-task algorithms without considering the transitive relations. Thus, our design of causal difference has a strong versatility, which can be extended to other types of multi-task DAG in future work, such as federated multi-task DAG learning \cite{gao2021feddag}. It is worth mentioning that we start the multi-task DAG learning for multi-modal data with a linear model first since this field is still unexplored and linear assumption is easy to comprehend. 

\bibliographystyle{ACM-Reference-Format}
\bibliography{ms}


\newpage
\appendix
\noindent \textbf{\Large APPENDIX}
\vspace{8pt}

\noindent This appendix provides additional details on our paper. Appendix \ref{app:complexity} analyzes the complexity of our algorithm for each Adam iteration. Appendix \ref{app:comparison} presents detailed results from our numerical study. Appendix \ref{app:newsumoscenario} introduces a new SUMO scenario and compares the results with those from the old scenario. Appendix \ref{app: future} discusses the potential future work.

\section{Computation of the complexity} \label{app:complexity}
The most computationally heavy part is computing the gradients in Section \ref{sec:algorithm}. We calculate the computation complexity of each iteration in the gradient-based algorithm as follows:

\begin{itemize}
    \item Derivative of $\|\mathbf{A}_{(l)} - \mathbf{A}_{(l)}\mathbf{C}_{(l)} \|_F^2$: the computation complexity is $O(N_lM_l)$. Therefore, for all tasks, the computation complexity is $O(\sum N_lM_l)$.
    \item Derivative of $h(W_{(l)})$: the computation complexity is $O(P_l^2M^2_l)$. Therefore, for all the tasks, the computation complexity is $O(\sum P^2_lM^2_l)$.
    \item Derivative of $DCD(W_{(l_1)}, W_{(l_2)})$: We can preprocess $\frac{\partial l(W_{(l_1)})_{kl}}{\partial W_{(l_1)ij}}$ for all $i, j, k, l \in \{1,\ldots,P_{(l_1)}\}$. The complexity of this part is $O(P_{l_1}^6)$. Then for all $DCD(W_{(l_1)}, W_{(l_2)})$, we use $O(P_{l_1}P_{l_2}M_{l_1}M_{l_2})$ to compute its derivative. Therefore, for all pairs of tasks, the computation complexity is $O(\sum_{l_1}\sum_{l_2}P_{l_1}P_{l_2}M_{l_1}M_{l_2}+\sum_l P_l^6)$.
\end{itemize}

Denote $P=\max P_l$ and $M=\max M_l$; in each Adam iteration, the overall computation complexity is $O(LNM+L^2P^2M^2+LP^6)$. 

\begin{figure*}[h]
    \centering
    \includegraphics[width=0.99\linewidth]{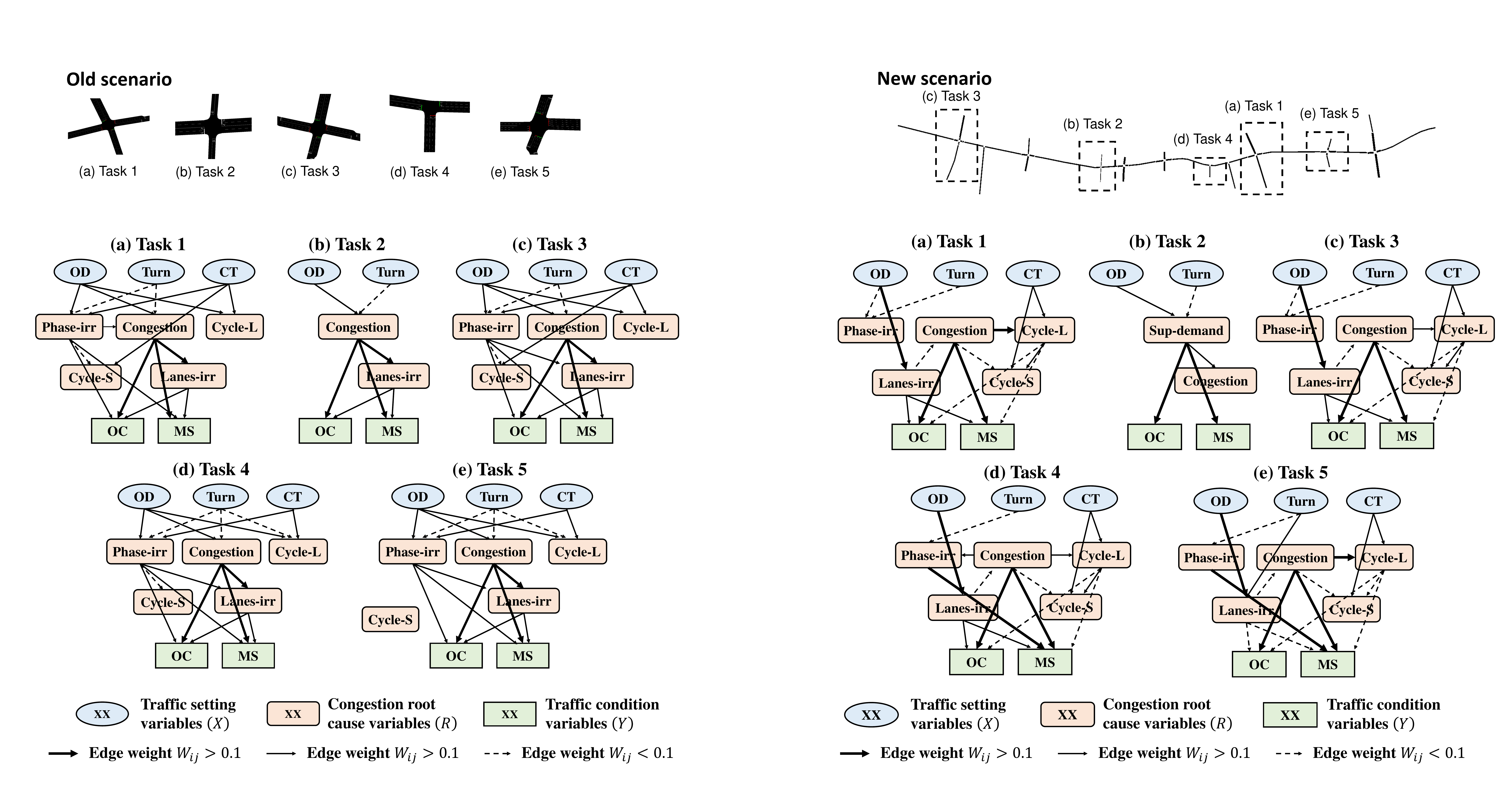}
    \caption{the MM-DAG results for the two cases}
    \label{fig:newsumoresult}
\end{figure*}

\section{Detailed result of numerical study} \label{app:comparison}

\begin{table}[th]
\small
\centering
\caption{\label{tab:compare} The detailed results of the numerical study with $N=20$ and $L=10$. The column labeled MM represents the multi-modal design, while the column labeled MT represents the multi-task method. The abbreviations OC, MD, and SE stand for Order-Consistency, Matrix-Difference, and Separate, respectively.}
\resizebox{\columnwidth}{!}{%
\begin{tabular}{cccccc}
\toprule
Model & MM? & MT &${\rm FDR}(\pm \rm std)$ &	$\rm TPR(\pm \rm std)$ & $\rm F1(\pm \rm std)$\\
\midrule
\textbf{MM-DAG} & Y & CD & $\mathbf{0.077\pm0.071}$ & $\mathbf{0.942 \pm0.075}$  & $\mathbf{0.929\pm0.051}$ \\
MV-DAG & N & CD & $0.545\pm0.047$ & $0.304 \pm0.107$  & $0.355\pm0.088$ \\
OC & Y & OC & $0.086\pm0.079$ & $0.821\pm0.085$ & $0.862\pm0.067$\\
MD & Y & MD & $0.302\pm0.140$ & $0.697\pm0.146$ & $0.691\pm0.126$\\
SE & Y & None & $0.344\pm0.123$   & $0.795\pm0.118$ & $0.709\pm0.097$\\
\bottomrule
\end{tabular}
}
\end{table}

Table \ref{tab:compare} presents a comprehensive overview of the numerical study with $N=20$ and $L=10$. In the following analysis, we will delve into the results and draw a conclusion based on the performance presented in the table.

\textbf{Explanation of the difference between MV-DAG and MM-DAG}: The MV-DAG approach cuts each functional data into a 10-dimensional vector by averaging the values within each of the 10 intervals. Compared to MM-DAG, MV-DAG has a 61.7\% lower F1 score, and we believe the reasons are twofold:

\begin{itemize}
    \item This preprocessing approach, working as a dimension reduction technique, may result in the loss of critical information of functional data.
    \item in MM-DAG, we delicately design a multimodal-to-multimodal (mulmo2) regression, which contains four carefully-designed functions, i.e., regular regression, func2vec regression, vec2func regression, and func2func regression (as shown in Fig. \ref{fig:MM-DAG}); whereas the MV-DAG only contains regular regression since all the function data have been vectorized.
\end{itemize}

\textbf{The contribution of CD design}: The performance of these three baselines (order-consistency, Matrix-Difference, Separate) in the new settings as in Table \ref{tab:compare} above. It is worth mentioning that all three baselines underwent the same multimodal-to-multimodal regression and got the same matrix $A$.

The table clearly indicates that our 'CD' design significantly contributed to improving the F1 score: MM-DAG has another +6.7\% F1 gain compared to order-consistency, as well as another +23.8\% F1-score gain compared to Matrix-Difference. These performance gains come purely from our CD design.

\textbf{The effectiveness of multitask learning}: By comparing MM-DAG with the baseline *Separate*, we show that it is essential to train the multiple overlapping but distinct DAGs in our multitask learning manner.

\textbf{Conclusion}: We compared our proposed MM-DAG model to the MV-DAG model to verify the contribution of the multi-modal design. Additionally, we compared MM-DAG to the \textit{Order-Consistence}, \textit{Matrix-Difference}, and \textit{Separate} models to demonstrate the effectiveness of our Causal Difference design. By combining these two comparisons, we have shown the effectiveness of both designs.

\section{New SUMO scenario} \label{app:newsumoscenario}

We constructed a more complex traffic scenario in SUMO, where 5 neighbor intersections in FengLinXi Road are used. In this case, the 5 intersections are not independent of the others. The detailed SUMO settings are as follows:
\begin{itemize}
    \item For the OD demand, we set OD demand as the number of total OD pairs in a scenario and randomly assign the origin and destination for each OD pair in the SUMO.
    \item For the turning probability, we calculate the turning vehicles at each intersection and divide by the total number of vehicles.
    \item The definition and collection of the remaining variables remain unchanged.
    \item In this new scenario, there is a new cause of congestion: [sup-demand], corresponding to OD demand exceeds the capacity of the intersection, as shown in Task 2 of our new results in Fig. \ref{fig:newsumoresult}. Yet this cause of congestion never occurred in the old scenario, so we did not plot this node in the DAGs of the old case study.
\end{itemize}

We have 96 samples in total, where each sample corresponds to a scenario in FengLinXi Road (Seeing Fig. \ref{fig:newsumoresult}). For each task, the data is collected by the sensors of the corresponding intersection. The result is shown in Fig. \ref{fig:newsumoresult}. 

We give an interpretation of the difference in DAGs between the old scenario and the new scenario. In the old scenario, the 5 intersections are \textbf{independent}. But in the new scenario, the 5 intersections are \textbf{cascaded and dependent}. The variable [OD-demand] is shared in all the DAGs since all the tasks used the same OD-demand. In the future revised manuscript, we will add both the \textbf{independent} case and \textbf{dependent} case, and the two cases have their own real-world applications.

\begin{itemize}
    \item \textbf{Independent} case: In the starting phase of deploying the traffic control systems, usually several single intersections are selected for the trial and cold-start. This trial period sometimes will last for more than one year and those intersections are usually scattered around different regions of a city.
    \item \textbf{Dependent} case: When the traffic signal control systems scale up and more intersections are signaled, sub-areas will be set up where up to eight intersections will be connected.
\end{itemize}

As we could observe in the Fig. \ref{fig:newsumoresult}:

\begin{itemize}
    \item The results of the two cases are different, which are reasonable given the two different assumptions.
    \item But we could still observe that the two results share quite consistent causal relations. For example, the thickest edges with weight > 0.5 are quite consistent in both independent and dependent cases.
    \item And we do admit that in the Dependent Case, the DAGs have unexpectedly better properties: (1) The DAGs are more sparse; (2) There are more shared edges across five different tasks. For example, the edge "Lane-Irrational" to "Congestion" appears in all five tasks.
\end{itemize}

\section{Potential Future Work}
\label{app: future}
In future work, we may like to try some deep learning methods. For example, we can consider incorporating layers able to deal with functional data \cite{yao2021deep}, and then extracting nonlinear features for all the nodes using graph neural network \cite{yu2019dag}.

\section*{Acknowledgements}

This paper was supported by the SenseTime-Tsinghua Research Collaboration Funding, NSFC Grant 72271138 and 71932006, the BNSF Grant 9222014, Foshan HKUST Projects FSUST20-FYTRI03B and the Tsinghua GuoQiang Research Center Grant 2020GQG1014.



\end{sloppypar}
\end{document}